\title{Conservative classifiers do consistently well with improving agents: characterizing statistical and online learning}
\author[1, 2]{Dravyansh Sharma}
\author[3]{{Alec Sun} \thanks{Alphabetical order.}}
\affil[1]{Northwestern University}
\affil[2]{Toyota Technological Institute at Chicago}
\affil[3]{University of Chicago}
\begin{document}

% Title page for title and abstract only.
% \begin{titlepage}
\maketitle
\begin{abstract}
    Machine learning is now ubiquitous in societal decision-making, for example in evaluating job candidates or loan applications, and it is increasingly important to take into account how classified agents will react to the learning algorithms. The majority of recent literature on \emph{strategic classification} has focused on reducing and countering deceptive behaviors by the classified agents, but recent work of \citet{attias2025pac} identifies surprising properties of learnability when the agents genuinely improve in order to attain the desirable classification, such as smaller generalization error than standard PAC-learning. In this paper we characterize so-called \emph{learnability with improvements} across multiple new axes. We introduce an asymmetric variant of \emph{minimally consistent} concept classes and use it to provide an exact characterization of proper learning with improvements in the realizable setting. While prior work studies learnability only under general, arbitrary agent improvement regions, we give positive results for more natural Euclidean ball improvement sets. In particular, we characterize improper learning under a mild generative assumption on the data distribution. We further show how to learn in more challenging settings, achieving lower generalization error under well-studied bounded \emph{noise} models and obtaining mistake bounds in realizable and agnostic \emph{online} learning. We resolve  open questions posed by \citet{attias2025pac} for both proper and improper learning.

% Text

% Machine learning is now ubiquitous in societal decision-making, for example in evaluating job candidates or loan applications, and it is increasingly important to take into account how classified agents will react to the learning algorithms. The majority of recent literature on strategic classification has focused on reducing and countering deceptive behaviors by the classified agents, but recent work of Attias et al. identifies surprising properties of learnability when the agents genuinely improve in order to attain the desirable classification, such as smaller generalization error than standard PAC-learning. In this paper we characterize so-called learnability with improvements across multiple new axes. We introduce an asymmetric variant of minimally consistent concept classes and use it to provide an exact characterization of proper learning with improvements in the realizable setting. While prior work studies learnability only under general, arbitrary agent improvement regions, we give positive results for more natural Euclidean ball improvement sets. In particular, we characterize improper learning under a mild generative assumption on the data distribution. We further show how to learn in more challenging settings, achieving lower generalization error under well-studied bounded noise models and obtaining mistake bounds in realizable and agnostic online learning. We resolve  open questions posed by Attias et al. for both proper and improper learning.
\end{abstract}

% \tableofcontents
% \end{titlepage}

\section{Introduction}

Suppose that a school is trying to create a machine learning classifier to admit students based on their standardized test score. The school uses a cutoff $\hat{\theta}$ to determine whether a student should be admitted. If the school publishes this cutoff, then students immediately below the cutoff will want to boost their test scores in order to be admitted, for example by studying harder or by registering for booster courses. This is an example of where deploying a classifier can influence the behavior of the agents it is aiming to classify. We assume binary classification and that agents want to be positively classified.

There are two views one can take on this phenomenon. The first is that the actions that agents take in response to the deployed classifier do not truly improve the agent's quality. This setting is known as strategic classification \citep{hardt2016strategic} or measure management \citep{bloomfield2016counts}. A second view, however, which is the focus of this work, is that agents' actions lead to real improvements. For example, studying or taking classes in order to get a higher exam score can genuinely make a student more qualified, so the school would want to admit any student that achieves their desired cutoff $\hat{\theta}$. This setting, when agents respond to the classifier to potentially improve their classification while changing their true features in the process, is known as \emph{strategic improvements} \citep{jkleinhowdo,causalstratmiller2020}.

Previous works seek to efficiently incentivize and maximize agent improvement \citep{causalstratmiller2020,jkleinhowdo,pmlr-v119-shavit20a}. In a recent paper, \citet{attias2025pac} instead take the agent improvement function as given and study \emph{statistical learnability} with improvements. \citet{attias2025pac} show fundamental differences in the learnability of concept classes compared to both standard PAC-learning where the agents cannot respond to the classifier, as well as the strategic setting where the agent tries to deceive the classifier to obtain a more favorable classification. Surprisingly, learning with improvements can sometimes be easier than the standard PAC setting, and it can sometimes be harder than strategic classification. \citet{attias2025pac} provide concrete examples showing the separation between standard PAC-learning, strategic PAC-learning, and PAC-learning with improvements.

\citet{attias2025pac}'s results focus on realizable and fully offline learning. They give necessary and sufficient conditions for proper learning in terms of the intersection-closed property of concept classes. They further show some examples showing improper learning may be possible when this property does not hold, at least for some nice distributions. %and a graph model in which each node represents an agent and agents can improve to their neighbors. 

We give an exact characterization of which concepts classes are PAC learnable in the proper and realizable setting using a new property we call \emph{nearly minimally consistent}. This improves upon results of prior work. We also design classifiers to maximize accuracy for strategic improvements in more challenging settings as well as under more natural assumptions. For improper learning, we show a natural sufficient condition on the data distribution for learning when the agents can improve within Euclidean balls. Under this condition, we obtain tight sample complexity bounds, up to logarithmic factors.

Next, we turn our attention to learning with improvements in more challenging settings. We study learning with bounded label noise (including random classification noise and Massart noise), where we can no longer rely on positive labels in our training set as being perfectly safe. For example, a weak student who studied a small random subset of the syllabus might get lucky with some small probability if the test questions happen to be from the part the student reviewed. We show how to design Bayes optimal classifiers, that is achieve \tsf{OPT} expected error in the learning with improvements setting. In contrast, one can only hope to achieve $\tsf{OPT}+\eps$ in standard PAC learning.

Finally, we initiate the study of online learning with improvements, which may be more realistic than assuming that the agents come from a fixed distribution. This is particularly important to study when agents can move in response to our classification over time. For example, a false positive in our published classifier may be exploited by an increasing number of agents. Further, we handle even the agnostic learning in the online setting where no classifier available to the learner may be perfect. We design ``conservative'' versions of the majority vote classifier which achieve near-optimal mistake bounds for both realizable and agnostic online learning with improvements.  %The main conceptual advance is that algorithms that take into account strategic improvements should generally be more \emph{conservative}. This is due to reduced concern over false-negative errors, since agents in those regions may still be able to improve to be classified as positive, as well as increased concern over false-positive errors, which may cause individuals to ``improve'' incorrectly.

\subsection{Contributions}

Our paper makes the following contributions on learning with improvements in challenging and natural settings:

\begin{itemize}
    \item \textbf{Proper learning for any improvement function.} In \cref{sec:proper}, we introduce a new property of concept classes called \emph{nearly minimally consistent} and show that this property fully characterizes which concept classes can be learned with improvements for any improvement function in the proper, realizable setting. This resolves an open question of \citet{attias2025pac}.

    \item \textbf{Improper learning for Euclidean ball improvement sets.} In \cref{sec:ball}, we move beyond proper learning. We show that to get positive results in the improper setting we need to make assumptions about both the agent improvement set and the data distribution. We prove that the simple memorization learning rule can learn \emph{any} concept class improperly, even those with infinite VC-dimension, assuming that the improvement set is or contains a Euclidean ball and that the data distribution satisfies a \emph{coverability} condition. Both of these assumptions are natural and appear in prior literature. This addresses another question raised by \citet{attias2025pac}.

    \item \textbf{Learning with noise.} In \cref{sec:noise}, we construct optimal algorithms for learning linear separators in the improvements setting with bounded label noise under isotropic logconcave distributions and instantiate our algorithms for many well-studied noise models. To the best of our knowledge, we are the first to consider learning with  strategic agents under label noise. 

    \item \textbf{Online learning on a graph.} In \cref{sec:graph}, we study mistake bounds for online learning with improvements for the discrete graph model of \citet{attias2025pac} in the more challenging online setting. In both  realizable and agnostic settings, we prove that risk-averse modifications of the weighted majority vote algorithm enjoy near-optimal mistake bounds.
    
\end{itemize}

\noindent We also discuss the conceptual differences between learning with improvements and learning under strategic classification in each of the settings we study above.

\subsection{Related work}

Our paper is related to several works studying learning in strategic and adversarial environments, such as strategic classification \citep{disparateeffects,smithasocialcost,braverman_et_al,strategicperceptron,incentiveawarenika}, learning with improvements \citep{jkleinhowdo,incentiveawarenika,attias2025pac}, reliable learning \citep{rivest1988learning,yaniv10a}, and learning with noise \citep{Balcan2020NoiseIC}. For a detailed discussion of related work, see \cref{a:related-work}. 

% \todo{Potential addition: average case analysis where the agent moves to a uniformly random point in $\Delta(x)$. Concrete questions: (1) thresholds on a line: does it help to be less conservative? [E.g. $Loss^E$ in Sec 5.2 of the ICML paper] (2) Is there an example where H is learnable here but not in the worst-case: check the case of two-interval}
\section{Model} \label{sec:model}

\citet{attias2025pac} propose the following formal model for learning with improvements. Let $\mathcal{X}$ denote the instance space consisting of points or agents, and the label space is binary $\{0,1\}$. % with the ability to {\it improve} (defined below). 
Here the label $0$ is called the {\it negative} class and label $1$ is called the {\it positive} class, and all the agents would prefer to be classified positive. Let $\Delta:\mathcal{X}\rightarrow 2^\mathcal{X}$ denote an {\it improvement function} that maps each point  $x\in \mathcal{X}$ to its  {\it improvement set} $\Delta(x)$, to which $x$ can potentially move after the learner has published the classifier in order to be classified positively. For example, if $\mathcal{X}\subseteq\mathbb{R}^d$, a natural choice for $\Delta(x)$ could be an $\ell_2$-ball centered at $x$.
% The Euclidean ball improvement set is well-studied in the strategic classification literature \citet{}, and we study improper learning under the $\ell_2$-ball improvement set in \cref{sec:ball}.
Let $\mathcal{H}\subseteq \{0,1\}^\mathcal{X}$ denote the concept space, that is, the set of candidate classifiers. We assume the existence of a ground truth function $f^*:\cX\to \bin$, which represents the true label of every point.

The goal is to learn the ground truth by sampling labeled instances from $\cX$. After seeing several samples from $\cX$, the learner will publish a classifier $h:\mathcal{X}\rightarrow\{0,1\}$. Each point now reacts to 
$h$ in the following way: if the agent was classified negative by $h$, it tries to find a point in its improvement set $\Delta(x)$ that is classified positive by $h$ and moves to it. %\citep{hardt2016strategic,zrnic2022leadsfollowsstrategicclassification} 
The agent will only move if such a positive point exists. %Note that the agents do not know the true function $f^*$ and as a result cannot react with respect to the ground truth, only based on $h$.
We formalize this as the {\it reaction set} with respect to $h$,
\begin{align} \label{eq:reaction-set}
    \Delta_{h}(x) =
    \case{
        \{x\} &\teif h(x) = 1 \text{ or if } \{x'\in \Delta(x) \mid h(x')=1\}= \emptyset\\
        % \{x\} &\teif \{x'\in \Delta(x) \mid h(x')=1\}= \emptyset \\
        \{x'\in \Delta(x):  h(x')=1\} & \teoth.
    }
\end{align}
Note that if $h$ classifies $x$ as positive,  $x$ stays in place. If $h$ classifies $x$  as negative, there are two cases. Either, there is no point in its improvement set where the agent is classified positive by $h$ and the agent does not move. Else, the agent moves to some point $x'$ to be predicted positive by $h$. Our improvement set model is equivalent to the behavior of utility-maximizing agents that have a utility equal to $h(x)-\text{cost}(x)$, and $\Delta(x)=\{x'\in\mathcal{X}\mid \text{cost}(x)<1\}$. These agents have a utility of 1 for being classified as positive, a utility of 0 for being classified as negative, and incur a cost for moving, where $\Delta(x)$ corresponds to the points that $x$ can move to at a cost less than 1.

A \emph{classification error} in the improvements setting is said to occur if there exists a point in the reaction set of $x$ where $h$ disagrees with $f^*$, namely
\begin{align} \label{def:loss-function}
    \textsc{Loss}(x; h, f^*)=\max_{x'\in \Delta_{h}(x)} \one\left[h\left(x'\right)\neq f^*\left(x'\right)\right].
\end{align}
According to \cref{def:loss-function}, agents $x$ with $h(x)= 0$ will improve to a point $x'$ in their reaction set with $h(x') = 1$ if possible, breaking ties in the \emph{worst case} in favor of points $x'$ for which $f^*(x') = 0$. This definition is natural if we want our classifiers to be robust to unknown tiebreaking mechanisms, and would also make sense if improving to points whose true label according to $f^*$ is negative is less costly than improving to points whose true label is positive. Hence \cref{def:loss-function} implies that a classification error occurs for $x$ if one of the following is true: $x$ itself is a false positive, $x$ is a false negative and there is no positive $x'\in \Delta_h(x)$ that $x$ can improve to, or $x$ is negative and there exists a false positive $x'\in \Delta_h(x)$ that $x$ can move to. This loss function favors \emph{conservative} classifiers that label uncertain points as negative rather than positive. For example, consider a scenario in which there are no false positives, equivalently $\bc{x\mid h(x) = 1} \subs \bc{x\mid f^*(x) = 1}$. Note that $h$ can have zero loss so long as all points $x$ for which $h(x) = 0$ and $f^*(x) = 1$ can improve to some point $x'\in \Delta(x)$ with $h(x') = 1$. Importantly, the fact that some true positives might need to put in effort to improve in order to be classified as positive does not count as a classification error in learning with improvements.

\begin{remark}
    Our improvement set based abstraction has a one-to-one  correspondence with utility-maximizing agents, including the adversarial tiebreaking behavior, as follows. Set the agent utility to be $1$ at positive points $x$ with $h(x)=1$, $0$ at negative points $x$ with $h(x)=0$, and the utility of movement from $x$ to a point $x'\in \Delta(x)$ as a real number in $[0,1]$. As an example, we could set the utility of movement to $\one [x’ \in \Delta(x)]\cd h(x')\cd \fr{2 – f^*(x')}{3}$. For this utility function, the agent has no incentive to move if either $\one[x’ \notin \Delta(x)]$ or $h(x')=0$, else it has a utility of $\fr23$ for ground-truth negative points $f^*(x')=0$ and $\fr13$ for ground-truth positive points, so the agent moves in either case but prefers the former. Note that $\Delta(x)$ consists of the points to which the agent $x$ would ever consider (or is able to) move to, and its incentives to move to a point $x’$ within $\Delta(x)$ are governed by $h(x')$ and $f^*(x')$ as described above.

    Conversely, our improvement set based abstraction can be used to model an arbitrary utility function as follows. Without loss of generality say that the utility of being positive is $1$ and negative is $0$. Movements are associated with cost functions (negative utilities) for each $x\rightsquigarrow x'$ move, and costs outside of $(0,1)$ can be ignored as the agent either always moves or never moves, irrespective of the classifier. We can define the improvement set $\Delta(x)$ to be the set of points where the cost is in $(0,1)$. Now the agent with $h(x)=0$ will move to a point $x'$ with $h(x’)=1$ as long as $x'\in \Delta(x)$, as their net utility is $1 - \text{cost}(x,x') > 0$. We can further incorporate worst-case tiebreaking by defining $\Delta(x)$ to only consist of points with $f^*(x')=0$.

\end{remark}

 \noindent {\it PAC-learning with improvements} is defined similarly to standard PAC-learning but for the above loss that incorporates agents' movements. A learning algorithm $\cA$ has access to a training set consisting of $m$ samples $S\in \cX^m$ drawn i.i.d. from a fixed but unknown distribution $\cD$ over $\cX$ and labeled by the ground truth $f^*$. The learner's population loss is $\textsc{Loss}_\cD(h,f^*) = \bP_{x\sim \cD} \bb{\textsc{Loss}(x;h,f^*)}$. Most of the results of this paper focus on the \emph{realizable} setting in which $f^*\in \cH$ and for which we can define PAC-learnability as follows:

\begin{definition}{\cite[Definition 2.2]{attias2025pac}}
    A learning algorithm $\cA$ \emph{PAC-learns with improvements} a concept class $\cH$ with respect to improvement function $\Delta$ and data distribution $\cD$ with sample complexity $m:= m(\eps,\delta;\Delta,\cH, \cD)$ if for any $f^*\in \cH$ and $\eps,\delta>0$, the following holds: with probability at least $1-\delta$, $\cA$ takes in as input a  sample $S\overset{\text{i.i.d.}}{\sim} \cD^{m}$ labeled according to $f^*$ and outputs $h:\cX\to \bin$ such that $\textsc{LOSS}_\cD(h,f^*) \le \eps$.
\end{definition}

\noindent In the realizable setting, \cref{sec:proper} considers \emph{proper} learning where the learning algorithm $\cA$ is required to output a hypothesis $h\in \cH$ that is in the concept class, and \cref{sec:ball} considers \emph{improper} learning where $\cA$ is allowed to output any function $h:\cX\to \bin$. \cref{sec:noise} shows that it is possible to learn Bayes optimal classifiers in the presence of bounded label noise. \cref{sec:graph} considers the more challenging \emph{agnostic} setting in the discrete graph model where the ground truth $f^*$ is not necessarily in $\cH$. The goal is to learn a concept $h$ that does almost as well as the best concept in $\cH$ in hindsight with respect to classification error when labels are revealed online, possibly in an adversarial sequence.

%The discrete graph model is defined in \cref{sec:graph}.

% In the discrete graph model, agents are located on the nodes of a finite undirected graph and the edges determine the improvement function: agents can move to neighboring nodes. Formally, for an undirected graph $G = (V, E)$, we have $\cX = V$ and improvement sets $\Delta(x) = \bc{x'\in V: \bc{x,x'}\in E}$. The ground truth $f^*:V\to \bin$ can be viewed as a partition of $G$, and we take the concept class $\cH$ to be the set of all possible labelings of $V$.
\section{Characterizing proper PAC-learning with improvements for any improvement function} \label{sec:proper}

In this section we prove a complete characterization of which concept classes are properly PAC-learnable with improvements for any improvement function $\Delta$. Our main conceptual advance is establishing a connection between PAC-learnability with improvements and the \emph{minimally consistent} property that characterizes PAC-learnability with one-sided error \citep[Chapter 2.4]{natarajan_machine_2014}. For a discussion of previous work by \citet{attias2025pac} that partially characterizes proper PAC-learning with improvements, see \cref{a:proper}.

\subsection{Learning with one-sided error}

To build up to our main result (\cref{thm:improvements-minimally-consistent}), we first review relevant background on learning with one-sided error.

\begin{definition}
    An algorithm $A$ learns a concept class $\cH$ with \emph{one-sided error} if $A$ PAC-learns $\cH$ and the concept output by $A$ does not have false positives.
\end{definition}

% It is well-known that a concept class is PAC-learnable with one-sided error if and only if the class is \emph{minimally consistent}, a property that we define below.
\noindent For a concept $f$, denote by $\graph(f)$ the set of all examples for $f$, namely $\bc{(x, f(x))}_{x\in \cX}$. We say that $f$ is \emph{consistent} with a set of examples $S\subs \cX\times \cY$ if $S\subs \graph(f)$. Let $S\subs \graph(f)$ for some $f\in \cH$. A concept $g\in \cH$ is the \emph{least} $g\in \cH$ consistent with $S$ if $g$ is consistent with $S$ and for all $h\in \cH$ consistent with $S$, every instance classified positive by $g$ is also classified positive by $h$. Equivalently, we can define \emph{least consistent} in terms of the support $\supp(h) := \bc{x:h(x) = 1}$ of a concept $h$: for a realizable sample $S$, a concept $g\in \cH$ is least consistent if for every consistent concept $h\in \cH$, we have $\supp(g) \subs \supp(h)$. Note that a least consistent $g$ may not exist.

\begin{definition}
    A concept class $\cH$ is \emph{minimally consistent} if for each $f\in \cH$ and each nonempty and finite subset $S\subs \graph(f)$, there exists a least $g\in \cH$ consistent with $S$. 
\end{definition}

\noindent It is known from prior work that \emph{minimally consistent} characterizes learning with one-sided error.
      
\begin{theorem}{\cite[Chapter 2.4]{natarajan_machine_2014}} \label{thm:one-sided}
    A concept class is PAC-learnable with one-sided error if and only if it has finite VC-dimension and is minimally consistent.
\end{theorem}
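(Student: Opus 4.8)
The plan is to prove both directions of the equivalence: that minimal consistency makes the ``output the least consistent hypothesis'' rule a one-sided PAC learner, and that its failure lets an adversary fool every one-sided learner. For the forward direction I would analyze the natural algorithm: on a sample $S$ labeled by $f^*\in\cH$, output the least $g\in\cH$ consistent with $S$. This is well-defined because, in the realizable setting, $S$ is a nonempty finite subset of $\graph(f^*)$ with $f^*\in\cH$, so minimal consistency supplies a least such $g$; write $g_S$ for it. It has no false positives: $f^*$ is itself consistent with $S$, so by the definition of ``least'' every point $g_S$ labels positive is labeled positive by $f^*$, i.e.\ $\bc{x\mid g_S(x)=1}\subs\bc{x\mid f^*(x)=1}$. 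The remaining task is to bound the false-negative error $\cD(\bc{x\mid f^*(x)=1,\ g_S(x)=0})$. The structural fact I would use is that $g_S$ is \emph{monotone} in the sample: if $S\subs S'$ then $g_{S'}$ is consistent with $S$, so $\bc{x\mid g_S(x)=1}\subs\bc{x\mid g_{S'}(x)=1}$ and the false-negative region can only shrink as examples accumulate; moreover any freshly drawn example lying in the current false-negative region is forced into the positive region the instant it enters the sample. A standard argument then converts this self-repair property into a bound on the measure of the residual false-negative region in terms of the sample size, yielding the PAC guarantee. Making this bound quantitative -- in particular whether the sample complexity is the distribution-free $O\!\left(\frac{1}{\eps}\log\frac{1}{\delta}\right)$ one hopes for, a VC-type quantity, or something involving further parameters -- is the one delicate point in this direction, and is where the ambient conventions (e.g.\ finiteness of $\cX$) enter.

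For the converse I would prove the contrapositive. Suppose $\cH$ is not minimally consistent, witnessed by $f\in\cH$ and a nonempty finite $S\subs\graph(f)$ admitting no least consistent $g\in\cH$. Let $Q(S)$ be the intersection of the positive regions $\bc{x\mid h(x)=1}$ over all concepts $h\in\cH$ consistent with $S$. Every such $h$ agrees with $f$ on the finitely many points of $S$, so $Q(S)$ is a labeling consistent with $S$; hence ``no least consistent concept'' says precisely that $Q(S)\notin\cH$, and also that $Q(S)\subsetneq\bc{x\mid h(x)=1}$ \emph{strictly} for every consistent $h$ (otherwise that $h$ would be the least one). I would then take a data distribution placing almost all its mass on the points of $S$ -- so that with probability approaching $1$ the learner's sample, labeled by any consistent target, equals $S$ -- and scatter the remaining mass over the points witnessing these strict containments for a carefully chosen family of consistent targets. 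To avoid false positives against all of these targets simultaneously, a one-sided learner must, with high probability, output a hypothesis whose positive region lies inside $Q(S)$; but then against at least one target it incurs false-negative error exceeding $\eps$, and taking $\delta$ small relative to the size of the target family lets the union bound close the argument.

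The main obstacle is this converse construction: one must choose the target family and the distribution so that the learner genuinely cannot distinguish the candidate targets from its sample, yet the region on which they disagree still carries more than $\eps$ mass -- these two demands pull against each other, and dealing with the case where $Q(S)$ can only be carved out using infinitely many consistent concepts is where the real work lies.
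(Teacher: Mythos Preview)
The paper does not give its own proof of this statement; it is quoted from \citet{natarajan_machine_2014} as background, and the paper's contribution is the closely related Theorem~\ref{thm:improvements-minimally-consistent}. So there is no proof here to compare against directly, but the proof of Theorem~\ref{thm:improvements-minimally-consistent} recycles the standard argument almost verbatim, and one can read off from it what the intended proof looks like.

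On the forward direction, you have the right algorithm and the right reason it produces no false positives, but your route to the error bound is not the standard one and, as you yourself flag, is incomplete. Monotonicity and ``self-repair'' are true observations but do not by themselves control the false-negative mass: the error region can lose a single point per sample while still retaining mass $>\eps$ indefinitely, so without a complexity assumption this line stalls. The actual argument is much shorter: $g_S$ is \emph{consistent} with $S$, so uniform convergence over a finite-VC class gives $\bP_{x\sim\cD}[g_S(x)\ne f^*(x)]\le\eps$ once $m=O\!\left(\fr{1}{\eps}\left(d_\te{VC}(\cH)+\log\fr{1}{\delta}\right)\right)$. This is exactly what the paper invokes (citing \cite[Theorem~2.1]{natarajan_machine_2014}) inside the proof of Theorem~\ref{thm:improvements-minimally-consistent}. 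The finite-VC assumption is implicit in ``PAC-learnable'' and is what makes the bound go through; your hedging about ``ambient conventions'' is pointing at this but not naming it.

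On the converse, your plan is correct but heavier than needed. The paper's parallel construction (the first half of the proof of Theorem~\ref{thm:improvements-minimally-consistent}) simply takes $\cD$ uniform on $S_\cX$: whatever consistent $g\in\cH$ the learner outputs admits, by failure of minimality, another consistent $h\in\cH$ and a point $x_+$ with $g(x_+)=1$, $h(x_+)=0$; taking $h$ as the target makes $x_+$ a false positive for $g$. No mass-scattering and no infinite family of targets is required, and the difficulty you anticipate---infinitely many concepts needed to carve out $Q(S)$---never materializes. Your more elaborate construction would only be called for under a strictly distributional reading of ``no false positives,'' which is not the convention used here.
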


We remark that the minimally consistent property of a concept class is more general than the \emph{intersection-closed} property defined in \cref{a:proper}. Some natural examples of intersection-closed concept classes, which are thus also minimally consistent, include axis-parallel $n$-dimensional rectangles, polytopes in $\bR^d$, and $k$-CNF Boolean functions. On the other hand, not every minimally consistent class is intersection-closed, see \cref{ex:int-min}. 

\subsection{Our characterization}

To characterize PAC-learnability with improvements, we subtly modify the minimally consistent property to exclude sets of examples that are all labeled positively. We only require $\cH$ to be minimally consistent on subsets $S$ that have at least one negatively labeled example, calling this asymmetric variant \emph{nearly minimally consistent}:

\begin{definition}
    A concept class $\cH$ is \emph{nearly minimally consistent} if for each $f\in \cH$ and each nonempty and finite subset $S\subs \graph(f)$ that contains at least one negative example $(x,0)\in S$, there exists a least $g\in \cH$ consistent with $S$.
\end{definition}

We remark that the \emph{nearly minimally consistent}, as suggested by its name, is a superset of minimally consistent. \cref{ex:min-nearly} shows that this inclusion is strict, namely that there exist nearly minimally consistent concept classes that are not minimally consistent.

\noindent Our main result is that the new \emph{nearly minimally consistent} property we define above exactly characterizes PAC-learnability with improvements:

\begin{theorem} \label{thm:improvements-minimally-consistent}
    A concept class $\cH$ is properly PAC-learnable with improvements for all improvement functions and all data distributions if and only if $\cH$ has finite VC-dimension and is nearly minimally consistent.
\end{theorem}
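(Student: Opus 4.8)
The plan is to prove both directions of \cref{thm:improvements-minimally-consistent} by relating the improvements loss of a conservative proper learner to the one-sided error characterized in \cref{thm:one-sided}.

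For the ``if'' direction, assume $\cH$ is nearly minimally consistent and fix an arbitrary $\Delta$. The learner, on a sample $S$ labeled by $f^*\in\cH$, outputs the least $g\in\cH$ consistent with $S$ whenever $S$ contains a negative example (which exists by near minimal consistency), and otherwise (an all-positive $S$) outputs a fixed ``safe'' consistent concept, e.g.\ the all-ones hypothesis $\mathbf{1}$ if $\mathbf{1}\in\cH$. The crucial point is that a least consistent $g$ has no false positives: since $f^*$ is itself consistent with $S$ and $g$ is least, $\{x\mid g(x)=1\}\subs\{x\mid f^*(x)=1\}$. For such a $g$, an improving agent never reaches a misclassified point (if $g(x')=1$ then $f^*(x')=1$), so $\textsc{Loss}_\cD(g,f^*)\le\bP_{x\sim\cD}[g(x)=0,\ f^*(x)=1]$, which is exactly the one-sided error of $g$. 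Hence the improvements loss of the least consistent concept is dominated by its one-sided error, and the realizability analysis behind the ``if'' direction of \cref{thm:one-sided} --- the positive region of the least consistent concept is monotone in $S$ and trapped inside $\{x\mid f^*(x)=1\}$, so its one-sided error drops below $\eps$ with high probability --- gives the sample complexity. The all-ones case arises only with probability $(1-\bP_{x\sim\cD}[f^*(x)=0])^m$, which is below $\delta$ once $\bP_{x\sim\cD}[f^*(x)=0]\ge\eps$, and when $\bP_{x\sim\cD}[f^*(x)=0]<\eps$ a direct check shows a safe consistent concept already has loss at most $\eps$ (all its loss-inducing points lie outside $S$).

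For the ``only if'' direction, assume $\cH$ is not nearly minimally consistent, witnessed by $f\in\cH$ and a finite $S\subs\graph(f)$ with a negative example $(x_0,0)$ for which no least consistent concept exists. Let $X_S$ be the finite set of points appearing in $S$, take $\cD$ uniform on $X_S$, and choose $\Delta(x_0)=\cX$ and $\Delta(x)=\{x\}$ for $x\neq x_0$. Every concept consistent with $S$ agrees with $f^*$ on $X_S$, so a proper learner sees a sample whose distribution is the same regardless of which consistent concept is the ground truth; thus its (possibly randomized) output distribution $\mu$ is fixed, and any output inconsistent with $S$ already incurs loss at least $1/|X_S|$ at the offending point. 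For $h$ consistent with $S$, the only movement is that $x_0$ improves into $\{x'\mid h(x')=1\}$, and $\textsc{Loss}(x_0;h,f^*)=1$ exactly when $\{x'\mid h(x')=1\}\not\subs\{x'\mid f^*(x')=1\}$, i.e.\ $h\not\preceq f^*$ in the order by positive-region containment. On the set $\cG$ of concepts consistent with $S$ this order has no minimum (that would be a least consistent concept), so it suffices to prove: for every probability measure $\mu$ on $\cG$ there is $f^*\in\cG$ with $\bP_{h\sim\mu}[h\preceq f^*]\le 1/2$. If $\cG$ is not downward directed, two concepts have disjoint principal downsets, so one carries $\mu$-mass at most $1/2$; if $\cG$ is downward directed, then, having no minimum, it cannot support a probability measure concentrated near a bottom element --- taking a decreasing chain of downsets and using continuity from above forces $\inf_{f\in\cG}\bP_{h\sim\mu}[h\preceq f]=0$. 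Picking this $f^*$ as the ground truth makes the learner suffer loss $1$ at $x_0$ with probability at least $1/2$, contradicting PAC-learnability with improvements for the constants $\eps<1/|X_S|$ and $\delta<1/4$.

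The main obstacle is twofold. On the positive side, the least-consistent-concept learner must be analyzed without VC machinery, since the effective class of ``$\Delta$-covered'' positive regions $\{\{x\mid g(x)=1\}\cup\{x\mid\Delta(x)\cap\{x'\mid g(x')=1\}\neq\emptyset\}\mid g\in\cH\}$ can have infinite VC dimension --- which is precisely why near minimal consistency, rather than a dimension bound, is the right characterization; reducing the loss to one-sided error and invoking the non-uniform argument behind \cref{thm:one-sided} circumvents this, but the all-positive subcase needs genuinely separate care. On the negative side, the heart is the measure-theoretic claim that a probability measure on a preorder with no minimum cannot place more than half its mass inside every principal downset.
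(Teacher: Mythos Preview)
Your overall two-direction structure matches the paper's, and your reduction of the improvement loss of the least consistent concept to its one-sided error is exactly the paper's Case~2 argument. There is, however, a genuine gap in your treatment of the all-positive-sample case in the ``if'' direction.

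You propose to output ``a fixed `safe' consistent concept, e.g.\ $\mathbf{1}$ if $\mathbf{1}\in\cH$,'' and claim that when $\bP[f^*=0]<\eps$ this concept has loss at most $\eps$ because ``all its loss-inducing points lie outside $S$.'' But $\mathbf{1}$ need not lie in $\cH$ (take the class from \cref{ex:min-nearly}), and for an arbitrary consistent $h\in\cH$ the parenthetical does not bound the loss: the damaging points are false negatives that cannot improve, and those need not lie in or near $S$. Concretely, with $\cX=\{a,b,c\}$, $f^*=h_3$, $\cD$ giving $\bP[c]<\eps$ but $\bP[b]$ bounded away from $0$, and $\Delta(b)=\{b\}$, an all-positive sample supported on $\{a\}$ leaves $h_2$ consistent, yet $h_2$ incurs loss $\bP[b]$ at the unmovable false negative $b$. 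The paper closes this gap differently: it outputs \emph{any} consistent concept and invokes the VC bound for $\cH$ to get $\bP[h_S\neq f^*]\le\eps/2$ with high probability; combined with $\bP[f^*=1]\ge 1-\eps/2$ this gives $\bP[h_S=f^*=1]\ge 1-\eps$, and such points do not move. Relatedly, your assertion that the analysis ``must be done without VC machinery'' is a misconception: the $\Delta$-covered class can indeed have infinite VC dimension, but neither you nor the paper ever need it---once improvement loss is bounded by the $0$--$1$ loss of a concept in $\cH$ (which you yourself do for the least consistent concept), the VC dimension of $\cH$ suffices, and the paper uses it explicitly.

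For the ``only if'' direction you use the same $\cD$ and $\Delta$ as the paper. The paper's argument is the short ``for every output $g$ there is a bad target $h$'' step; you go further and try to handle randomized learners via the claim that any probability measure on a poset with no minimum has a principal downset of mass at most $1/2$. Your non-directed case is fine (two concepts with no common lower bound have disjoint principal downsets), but the directed case---``continuity from above forces $\inf_{f}\mu(\downarrow f)=0$''---is not justified: a decreasing chain of downsets can have nonempty intersection of positive measure, and one then has to iterate inside that intersection, which needs more than your one-line sketch. A cleaner route in the directed case: take any finite set of consistent outputs carrying at least half the mass, let $g_0$ be their common lower bound in $\cG$ (directedness applied finitely often), and pick $f^*\in\cG$ with $g_0\not\preceq f^*$; then every one of those outputs satisfies $g_i\not\preceq f^*$ (else $g_0\preceq g_i\preceq f^*$), so they all fail against $f^*$.
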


\begin{proof}

Note that if $\Delta(x) = \bc{x}$ for all $x$ then learning with improvements reduces to vanilla PAC-learning, so finite VC-dimension is necessary by the Fundamental Theorem of Statistical Learning. Assuming now that $\cH$ has finite VC-dimension, we first show that if a concept class $\cH$ is \emph{not} nearly minimally consistent then there exists an improvement function $\Delta$ and a data distribution $\cD$ for which $\cH$ is not PAC-learnable with improvements. Let $f\in \cH$ and $S\subs \graph(f)$ be such that $S$ contains a negative example $(x_-,0)\in S$ and there is no least concept consistent with $S$. Let $\cD$ be the uniform distribution over $S_\cX$ and suppose that the improvement function is
    $$\Delta(x_-) = \case{\cX & \teif x = x_- \\ \es & \teoth.}$$
    Suppose that a learning algorithm outputs a concept $g\in \cH$. If $g$ is not consistent with $S$, then either $g(x_-) = 1$ or $g(x) = 0$ for some $x\in S, x\neq x_-$. Since points in $S$ other than $x_-$ cannot move, such points cannot improve and hence $g$ suffers constant error. Hence we can assume that $g$ is consistent with $S$. Since there is no least concept consistent with $S$, there exists a concept $h$ also consistent with $S$ and a point $x_+$ for which $g(x_+) = 1$ and $h(x_+) = 0$. Note that $h$ could have very well been the target concept since $\cD$ is uniformly supported on $S$ for which both $g$ and $h$ are consistent with. However, note that $x_-$ can in the worst case ``improve'' to $x_+$ since $g(x_+) = 1$, but then outputting $g$ suffers constant error since the ground truth is $h(x_+) = 0$. We conclude that any learning algorithm must suffer constant error on $\cD$ for the improvement function $\Delta$ defined above.

    The other direction is to show that if a concept class $\cH$ \emph{is} nearly minimally consistent then $\cH$ is PAC-learnable with improvements. The learning algorithm we use (Algorithm \ref{alg:improve}) is very similar to the one used to learn with one-sided error \cite[Chapter 2.4]{natarajan_machine_2014}. However, we need to specify what concept the algorithm outputs when the training set consists of only positive examples, in which case there may not be a least concept consistent with the examples so far. In this case it turns outputting any concept consistent with the training set $S$ will work. Informally, this is because if $S$ consists of only positive examples, then with high probability it must have been the case that $f^*$ positively labels nearly all points according to $\cD$. By the Fundamental Theorem of Statistical Learning, the concept $h_S$ output by the learning algorithm agrees with $f^*$ on nearly all of $\cD$, which also means that $h_S$ positively labels nearly all points. Since positively label points do not move, then $h_S$ has low error in the improvement setting as well.
    
    We now give the formal proof. For a number of samples $m = O\bp{\fr{1}{\eps} \bp{d_\te{VC}(\cH) + \log \fr{1}{\delta}}}$, we know, for example see \cite[Theorem 2.1]{natarajan_machine_2014}, that the concept $h_S$ output by the the learning algorithm satisfies $\bP_{x\sim \cD} \bb{h_S(x) \neq f^*(x)} \le \fr{\eps}{2}$ with probability at least $1-\fr{\delta}{2}$ for any target concept $f^*\in \cH$. We split into two cases.

    \begin{itemize}

        \item \textbf{Case 1:} $\bP_{x\in \cD}\bb{f^*(x)=1} \ge 1- \fr{\eps}{2}$. Then the concept $h_S$ disagrees with $f^*$ on at most $\fr{\eps}{2}$ fraction of points according to $\cD$, so $\bP_{x\in \cD}[h_S(x) = f^*(x) = 1] \ge 1-\eps$. Such $x$ are positively classified so they do not move and hence incur zero improvement loss. We conclude that with probability at least $1-\fr{\delta}{2}$ the improvement loss is at most $\eps$.
        
        \item \textbf{Case 2:} $\bP_{x\in \cD}\bb{f^*(x)=1} \le 1- \fr{\eps}{2}$. Then after $m$ samples, the probability that the training set $S$ consists of a negative example is at least $1 - \fr{\delta}{2}$. For any $x\in \cX$ with $h_S(x) = 1$, $x$ does not move and hence the improvement loss is the same as the 0-1 loss $\one\bb{h_S(x) \neq f^*(x)}$. For any $x\in \cX$ with $h_S(x) = 0$, suppose that $x$ moves to a point $x'$, possibly equal to $x$. If $x' = x$ then the improvement loss is again the same as the 0-1 loss. Otherwise we can assume $x' \neq x$, which means that $h_S(x') = 1$ since $x$ only moves if it can improve. If $f^*(x) = 1$ then the loss $\one\bb{h_S(x') \neq f^*(x')}$ is at most the 0-1 loss $\one\bb{h_S(x) \neq f^*(x)} = 1$. Finally, we can assume $f^*(x) = 0$. Since $S$ consists of a negative example, $h_S$ is the least concept consistent with $S$, so $h_S(x') = 1\imp f^*(x') = 1$ in which case $x$ has zero improvement loss. By the union bound, with probability at least $1-\delta$ the improvement loss is at most $\fr{\eps}{2}$.
    \end{itemize}
    In both cases we conclude that with probability at least $1-\delta$ the improvement loss is at most $\eps$.
\end{proof}

\subsection{Discussion}

We now discuss the conceptual highlights of the proof of \cref{thm:improvements-minimally-consistent}. To learn with one-sided error, we output a \emph{conservative} classifier, namely the minimal concept that is consistent with the training set $S$, because we cannot afford \emph{any} false positives. In learning with improvements however, we can have false positives so long as points in $S$ do not ``improve'' to them. Assuming a worst-case improvement function $\Delta$, the only case where points in $S$ do not move to any false positive is when they were originally positively labeled. In other words, false positives are allowed only if $S$ consists of only positive labeled examples. It turns out that learning with improvements on such $S$ is easy: if we see only positive examples in the training set, with high probability the target concept $f^*$ originally consisted almost entirely of positive labels under $\cD$. In this case we can simply output \emph{any} concept $h$ consistent with $S$ to achieve low error.

\begin{algorithm}[t]
\caption{Proper learning with improvements}
\label{alg:improve}
\textbf{Input}: Set $S$ consisting of $m$ i.i.d. sampled instances $(x_1,y_1),\lds,(x_m,y_m)$\\
\begin{algorithmic}[1]
\IF{there exists $i$ such that $y_i = 0$}
\STATE \textbf{Output}: Least concept in $\cH$ consistent with $S$
\ELSE
\STATE \textbf{Output}: Any concept in $\cH$ consistent with $S$
\ENDIF
\end{algorithmic}
\end{algorithm}

% \todo{Add citations for each theorem in Fig 1}

Our theorem for proper PAC-learning with improvements in the realizable setting and its relation to the previous results from \citet{attias2025pac} as well as to learning with one-sided error can be neatly summarized using a Venn diagram (\cref{fig:venn}). Clearly any concept class that is intersection-closed is minimally consistent, and the following example shows a strict separation.

\begin{example}[Separation between intersection closed and minimally consistent] \label{ex:int-min}
     Suppose $\mathcal{X}=\{x_1,x_2\}$ and $\mathcal{H}=\{h_1,h_2\}$ with $h_1(x_1)=1, h_1(x_2)=0$ and $h_2(x_1) = 0, h_2(x_2) = 1$. Clearly $h_1\cap h_2\notin \mathcal{H}$ so $\mathcal{H}$ is not intersection-closed. Since knowledge of a single label tells us the target concept, there is at most one concept consistent with any non-empty set, so $\cH$ is minimally consistent.
\end{example}

\noindent Furthermore, any concept class that is minimally consistent is also nearly minimally consistent, and the following example shows a strict separation.

\begin{example}[Separation between minimally consistent and nearly minimally consistent] \label{ex:min-nearly}
    Let $\cX = \bc{x_1, x_2, x_3}$ and consider a concept class $\cH = \bc{h_1, h_2, h_3}$, where $h_i(x_j) = \one\bb{i\neq j}$ for all $i, j\in [3]$. Note that $\cH$ is not minimally consistent, since there is no least hypothesis $g\in \cH$ consistent with $S = \bc{(x_1, 1)}$. However, one can show that $\cH$ is nearly minimally consistent, the idea being that these singleton sets $S$ only contain positive examples and hence it is not required for a least $g\in \cH$ consistent with $S$ to exist. One can also show that $\cH$ is PAC-learnable with improvements.
\end{example}

\noindent Importantly, our results establish that PAC-learnability with one-sided error implies proper PAC-learnability with improvements. Nonetheless, since the difference between the learning algorithms for these two settings lies only in what happens when the training set $S$ has only positive labels, learning with improvements is only \emph{slightly} easier than learning with one-sided error.

\usetikzlibrary{shapes.geometric}
\begin{figure}[t]
    \centering
    \begin{tikzpicture}

    \node [ellipse,draw,align=center,
	label={\citet{attias2025pac}}] (A) at (0,-0.3){Intersection-\\closed};

    \node [ellipse,draw,text width=1.8in,
	minimum height=1.3in, minimum width=1.6in,
	label={[align=left] \citet{natarajan_machine_2014} characterizes \\ learnability  with one-sided error}] (B) at (-1.3,0.2){Minimally\\consistent};

    \node [ellipse,draw,text width=3.1in,
	minimum height=2.1in, minimum width=3in,
	label={[align=center] [Our work] exactly characterizes proper \\ learnability with improvements}] (C) at (-2.8,0.9){Nearly\\minimally\\consistent};

    \node [ellipse, draw,align=center, minimum height=0.5in, label={\citet{attias2025pac}}] (D) at (3.8,3.6) {Not intersection-closed\\$+$ universal negative point};
    
    \end{tikzpicture}
    \caption{Relation between proper PAC-learnability with improvements, PAC-learnability with one-sided error, and the nearly minimally consistent, minimally consistent, and intersection-closed properties, assuming that the concept class has finite VC-dimension.}
    \label{fig:venn}
\end{figure}
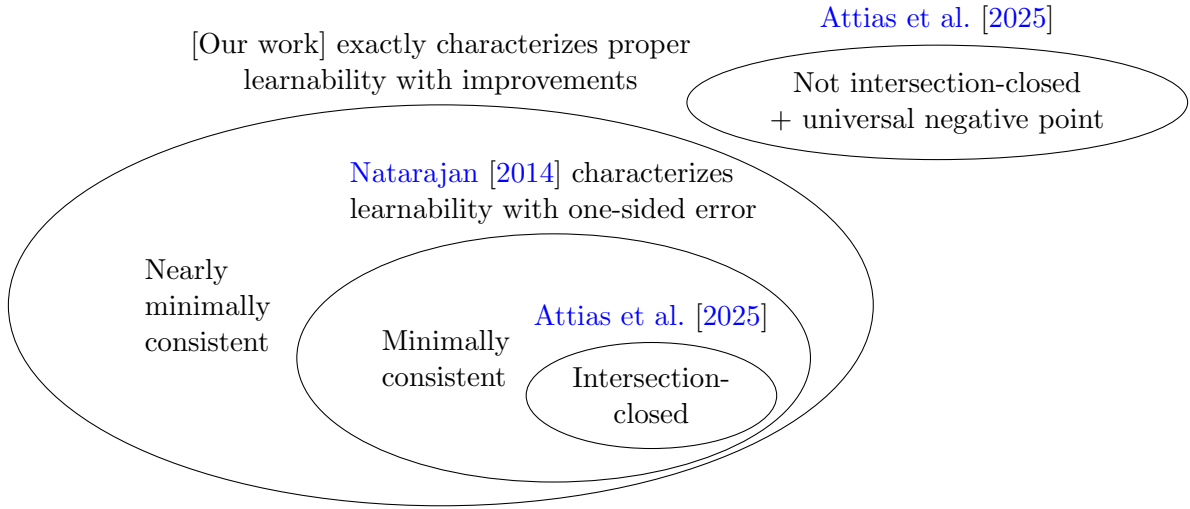

\subsection{Comparison with strategic classification}

% In this section we proved that a concept class $\cH$ is properly PAC-learnable with improvements for any improvement function $\Delta$ if and only if it is nearly minimally consistent. 
A natural question to ask is when a concept class is learnable for any $\Delta$ in the strategic classification model, where $\Delta(x)$ denotes the set of points $x'$ that $x$ can misreport. Recall that in the strategic classification, since $x$ can misreport as but not truly improve to any $x'\in \Delta(x)$, the loss function is the following: $$\textsc{Loss}(x; h, f^*)=\max_{x'\in \Delta_{h}(x)} \one\left[h\left(x'\right)\neq f^*\left(x\right)\right].$$ We claim that for any concept class in which there exists a concept such that greater than $\eps$ fraction of the points have negative label and greater than $\eps$ fraction have positive label, there is no algorithm that can PAC-learn for every $\Delta$. In order to get error at most $\eps$, the learning algorithm $\cA$ must output a hypothesis $h$ that labels at least one point positive, say $h(x_+) = 1$. However, since greater than than $\eps$ fraction of the points have negative label, if we take $\Delta(x) = \cX$ for all $x\in \cX$ then all negative points can misreport as $x_+$, thus incurring strategic loss greater than $\eps$. We conclude that PAC-learning for every $\Delta$ is impossible in the strategic setting unless all concepts either label all points in $\cX$ negative or all points in $\cX$ positive.
\section{Improper PAC-learning with ball improvement sets} \label{sec:ball}

In this section we consider improper learning with improvements. We first note that in order to get interesting results, we must make some assumption on the improvement function $\Delta$. This is because we show that any concept class that is improperly PAC-learnable for \emph{all} $\Delta$ must also be properly PAC-learnable, and we have already characterized proper PAC-learnability in \cref{sec:proper} using the nearly minimally consistent property. (Recall that proper learnability trivially implies improper learnability.) Indeed, if $\Delta(x) = \cX$ for all $x\in \cX$ then the learning algorithm cannot make any false positives or else every negative instance can ``improve'' to such a false positive and incur a classification error. On the other hand, if $\Delta(x) = \bc{x}$ for all $x\in \cX$ then the learning algorithm cannot incur too many false negatives as any false negative is unable to improve. These two examples show that if we make no assumption on $\Delta$, the concept class must be learnable with one-sided error, which implies that it is \emph{properly} PAC-learnable by our results in \cref{sec:proper}.

Therefore, in this section we will focus on PAC-learning of geometric concepts where $\cX\subs \bR^d$ for some $d$ and assume that the improvement function is $\Delta(x) = \bc{x'\in X \mid \norm{x' - x}_2 \le r}$, a Euclidean ball with radius $r$. (Our learnability upper bounds hold more generally for any improvement function for which $\Delta(x)$ \emph{contains} a ball of radius $r$ centered at $x$.) The $\ell_2$-ball improvement set is well-studied in the strategic classification literature \citep{zrnic2022leadsfollowsstrategicclassification} and corresponds to the agent being able to misreport each of their features by a continuous and bounded amount. We first construct a concept class with finite VC-dimension that shows that \emph{proper} learning under $\ell_2$-ball improvement sets is intractable in general, which motivates why we consider \emph{improper} learning.

\begin{example} \label{ex:union}
    We construct an example that is a union of two intervals, which clearly has finite VC-dimension. Let the instance space $\cX$ be $[0,1]$, let $\cH = \bc{h_b: h_{abc}(x) = 1 \text{ iff } x\in [\fr14,b)\bcup (b,\fr34]}$, and let $\cD$ be the uniform distribution over $[0,1]$. Let $r = \fr12$ and consider a target function $f^* = h_b$ with $b$ chosen uniformly in $\bp{\fr14, \fr34}$.

    For any learning algorithm, with probability 1 the learner will not see the point $b$ in its training data, so it learns nothing from its training data about the location of $b$. Note that $\textsc{Loss}(f^*, f^*) = 0$. On the other hand, we claim that for any other $h = h_b\in \cH$ we have $\textsc{Loss}(h_b, f^*) \ge \fr14$. Without loss of generality assume $b \le \fr12$. Note that in the worst-case, all points $x\le \fr14$, which are negatively labeled, will move to $b$, incurring loss at least $\fr14$.
\end{example}

\noindent In stark contrast to proper learning, in the improper setting we show that under a generative data assumption known as \emph{coverability} that \emph{all} concept classes are PAC-learnable, and furthermore such distributions are learnable with the simple memorization learning rule. For completeness we provide pseudocode for the memorization learning rule in Algorithm \ref{alg:memorization}.

\begin{algorithm}[t]
\caption{Memorization learning rule}
\label{alg:memorization}
\textbf{Input}: $m$ instances $(x_1,y_1),\lds,(x_m,y_m)$ \\
\begin{algorithmic}[1]
\STATE Initialize $h(x) = 0$ for all $x\in \bR^d$
\FOR{$i=1,\lds,m$}
\STATE If $y_i = 1$, set $h(x_i) = 1$
\ENDFOR
\end{algorithmic}
\textbf{Output}: $h$
\end{algorithm}

\subsection{Coverability asssumption and doubling dimension}

To state our main result, we first describe the coverability assumption, which was introduced by \citet{balcan2023analysis} in the context of robust learning in the presence of small, adversarial movements in the feature space.

\begin{definition}{\cite[Definition 1]{balcan2023analysis}} \label{def:covering}
    A distribution $\cD$ is $(\eps,\beta,N)$-coverable if at least a $1-\eps$ fraction of probability mass of the marginal distribution $\cD_{\cX}$ can be covered by $N$ balls $B_1,\lds,B_N$ in $\bR^d$ of radius $\fr{r}{2}$ and of mass $\bP_{\cD_X}[B_k] \ge \beta$.
\end{definition}

\noindent \cref{def:covering} means that one can cover most of the probability mass of the class with not too many potentially overlapping balls of at least some minimal probability mass. Coverability is important in many learning settings because it implies the following nice property: with enough samples from $\cD$, with high probability it will be the case that all but $\eps$ fraction of instances according to the marginal distribution $\cD_\cX$ will be distance at most $r$ from some sampled instance. 

\begin{lemma}{\cite[Theorem 4.4]{balcan2023analysis}}
\label{lem:covering}
    Suppose that $x_1,\lds,x_m$ are $m$ instances i.i.d. sampled from the marginal distribution $\cD_\cX$. If $\cD$ is $(\eps,\beta,N)$-coverable, for sufficiently large $m = O\bp{\fr{1}{\beta} \log \fr{N}{\gamma}}$, with probability at least $1-\gamma$ over the sampling we have $\bP\bb{\bcup_{i=1}^m B(x_i,r)} \ge 1-\eps$.
\end{lemma}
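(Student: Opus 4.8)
The plan is to show that, with the stated number of samples, with probability at least $1-\gamma$ every one of the $N$ covering balls guaranteed by \cref{def:covering} contains at least one sampled instance; a one-line triangle-inequality argument then upgrades this event to the desired covering guarantee. First I would fix balls $B_1,\lds,B_N$ of radius $\fr r2$ with centers $c_1,\lds,c_N$ as promised by $(\eps,\beta,N)$-coverability, so that $\bP_{\cD_\cX}\bb{B_k}\ge \beta$ for each $k$ and $\bP_{\cD_\cX}\bb{\bcup_{k=1}^N B_k}\ge 1-\eps$.

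The key geometric observation is that using radius-$r$ balls around the samples lets a single sample absorb an entire radius-$\fr r2$ covering ball: if $x_i\in B_k$, then for every $y\in B_k$ we have $\norm{y-x_i}_2\le \norm{y-c_k}_2+\norm{c_k-x_i}_2\le \fr r2+\fr r2=r$, so $B_k\subseteq B(x_i,r)$. Consequently, on the event $E$ that each $B_k$ contains at least one of $x_1,\lds,x_m$, we get $\bcup_{k=1}^N B_k\subseteq \bcup_{i=1}^m B(x_i,r)$, and hence $\bP\bb{\bcup_{i=1}^m B(x_i,r)}\ge \bP_{\cD_\cX}\bb{\bcup_{k=1}^N B_k}\ge 1-\eps$.

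It then remains to lower bound $\bP\bb{E}$. For a fixed $k$, since the samples are i.i.d.\ and $\bP_{\cD_\cX}\bb{B_k}\ge\beta$, the probability that no sample lands in $B_k$ is at most $(1-\beta)^m\le e^{-\beta m}$; a union bound over the $N$ balls gives $\bP\bb{\lnot E}\le N e^{-\beta m}$, which is at most $\gamma$ once $\beta m\ge \log\fr N\gamma$, i.e.\ for $m = O\bp{\fr{1}{\beta}\log\fr{N}{\gamma}}$. Combining this with the previous paragraph yields the claim. There is no serious obstacle here: the only thing to notice is the radius-$\fr r2$ versus radius-$r$ bookkeeping that makes ``hit every cover ball'' a sufficient event, after which the bound is a standard coupon-collector-style union bound over the $N$ balls; I would only be mildly careful to note that Euclidean balls are measurable so that $\bP_{\cD_\cX}\bb{B_k}$ and the union events are well defined.
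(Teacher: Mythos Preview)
Your proposal is correct and follows essentially the same approach as the paper's proof: both argue that with probability at least $1-\gamma$ every covering ball $B_k$ is hit by some sample (via the bound $(1-\beta)^m\le e^{-\beta m}$ and a union bound over the $N$ balls), and then use the radius-$\tfrac r2$ versus radius-$r$ triangle-inequality observation to conclude $\bigcup_k B_k\subseteq \bigcup_i B(x_i,r)$. The only difference is presentational; you spell out the triangle inequality step more explicitly than the paper does.
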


\noindent For completeness we provide a proof of the covering lemma from \citet{balcan2023analysis}.

\begin{proof}[Proof of \cref{lem:covering}]
    Fix ball $B_i$ in the cover from \cref{def:covering}. Let $\ol{\cB_i}$ denote the event that no point is drawn from ball $B_i$ over the $m$ samples. Since successive draws are independent and by definition $\bP_{\cD_\cX}[\cB_i]\ge {\beta}$, we have $$\bP\bb{\ol{\cB_i}}\le \left(1-{\beta}\right)^m\le \exp(-\beta m).$$ By a union bound over $N$ balls we have $$\bP\bb{\bcup_i \ol{\cB_i}} \le N\cd \exp(-\beta m)\le \gamma$$ for $m\ge\frac{1}{\beta}\log\frac{N}{\gamma}$. Therefore, with probability at least $1-\gamma$ we have $$\bcup_{i=1}^m B(x_i, r)\sups \bcup_{k=1}^N B_k\imp \bP\bb{\bcup_{i=1}^m B(x_i,r)}\ge \bP\bb{\bcup_{k=1}^N B_k}\ge 1-\eps$$ since for all $k\in[N]$ there is a sample $x_{i_k} \in B_k$ and $B_k$ is a ball of radius $\fr{r}{2}$.
\end{proof}

\noindent The coverability property of a distribution is closely related to the well-known notion of \emph{doubling dimension} of a metric space \citep{bshouty2009using, dick2017data}.

\begin{definition}[Doubling dimension]
    A measure $\cD_\cX$ has \emph{doubling dimension} $d'$ if for all points $x\in \cX$ and all radii $r>0$, we have $\cD_\cX(B(x,2r)) \le 2^{d'} \cd \cD_\cX(B(x,r))$.
\end{definition}

\noindent Note that the uniform distribution on Euclidean space $\bR^d$ has doubling dimension $d' = \Theta(d)$. Doubling dimension of a data distribution has been used to obtain sample complexities of generalization for learning problems \citep{bshouty2009using} as well as to give bounds on cluster quality for nearest-neighbor based clustering algorithms in the distributed learning setting \citep{dick2017data}. In our context, $\cX$ having finite doubling dimension and finite diameter is enough to yield complete coverability ($\eps=0$ in \cref{def:covering}).

\begin{proposition}{\cite[Lemma 4.7]{balcan2023analysis}} \label{prop:double}
    Let $\cX\subs \bR^d$ have diameter $D$ and doubling dimension $d' = \Theta(d)$. For $T\in \bN$, there exists a covering of $\cX$ with $N \le \bp{\fr{2D}{r}}^{d'}$ balls of radius $\fr{D}{2^T}$.
\end{proposition}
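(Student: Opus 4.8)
The plan is the standard iterated-halving argument for spaces of bounded doubling dimension. First I would reduce to covering a single ball: since $\cX$ has diameter $D$, fixing any $x_0 \in \cX$ gives $\cX \subseteq B(x_0, D)$, so it suffices to cover the radius-$D$ ball around $x_0$ by balls of radius $D/2^T$.

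Second I would establish the one-step covering fact: any ball $B(x, R)$ can be covered by $2^{\Theta(d')}$ balls of radius $R/2$. If one reads ``doubling dimension $d'$'' in the metric sense this is immediate, with constant exactly $2^{d'}$. Starting instead from the measure-doubling condition stated above, I would run a packing argument: take a maximal $(R/2)$-separated subset $P \subseteq B(x,R)$; the balls $\{B(p, R/4)\}_{p \in P}$ are pairwise disjoint and all contained in $B(x, 2R) \subseteq B(p, 4R)$; applying the doubling inequality a constant number of times gives $\cD_\cX(B(p, R/4)) \ge 2^{-\Theta(d')}\, \cD_\cX(B(x, 2R))$ for each $p$; summing the disjoint masses forces $|P| \le 2^{\Theta(d')}$; and maximality of $P$ means $\{B(p, R/2)\}_{p \in P}$ covers $B(x,R)$.

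Third I would iterate this bound: $B(x_0, D)$ is covered by $2^{\Theta(d')}$ balls of radius $D/2$, each of those by $2^{\Theta(d')}$ balls of radius $D/4$, and so on, so after $T$ steps $\cX$ is covered by $N \le 2^{\Theta(d' T)}$ balls of radius $D/2^T$. Taking $T = \lceil \log_2(2D/r) \rceil$ makes the final radius at most $r/2$ — exactly the regime needed to feed \cref{lem:covering} — and makes $2^T = \Theta(D/r)$, so $N \le (2D/r)^{\Theta(d')}$; since $d' = \Theta(d)$ the extra constant in the exponent is absorbed, recovering the claimed bound $N \le (2D/r)^{d'}$.

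The main obstacle is the second step: passing from the measure-doubling hypothesis to a metric covering bound with the correct $2^{\Theta(d')}$ dependence (and, if one insists the exponent constant be literally $1$ as the statement suggests, reading the doubling dimension metrically). Everything after that is routine bookkeeping with the geometric series of radii and the choice of $T$, so I would present it briefly.
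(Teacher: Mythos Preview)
The paper does not supply its own proof of this proposition; it is stated as a citation to \cite[Lemma 4.7]{balcan2023analysis} and then used as a black box. There is therefore nothing in the paper to compare your argument against.

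Your iterated-halving sketch is the standard route and is substantively correct. One small point: the final ``absorption'' step is not valid as written. From the measure-doubling hypothesis your packing argument yields $N \le (2D/r)^{C d'}$ for some absolute constant $C>1$; knowing $d' = \Theta(d)$ does not convert $C d'$ into $d'$ in the exponent. If you want the exponent to be literally $d'$ you must, as you note, read the hypothesis as the \emph{metric} doubling dimension (so the one-step cover uses exactly $2^{d'}$ balls). Since the proposition is only invoked in the paper to show coverability with parameters that are already loose in $d$, this is a cosmetic issue rather than a mathematical gap.
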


\subsection{Sample complexity upper bound}

Our main positive result is that the memorization learning rule (Algorithm \ref{alg:memorization}) can PAC-learn any distribution $\cD$ for which the conditional distribution of positive instances are coverable:

\begin{theorem} \label{thm:ball}
    Let the fraction of positive labels according to $\cD$ be a constant independent of $\eps$ and assume that $\cD^+_\cX$, the conditional marginal distribution for positive instances, is $(\eps,\beta,N)$-coverable. Then with probability at least $1-\gamma$, a predictor $h$ learned using Algorithm \ref{alg:memorization} from $m = O\bp{\fr{1}{\beta} \log \fr{N}{\gamma}}$ i.i.d. samples from $\cD$ has improvement loss $\textsc{Loss}_\cD(h, f^*) \le \eps$.
\end{theorem}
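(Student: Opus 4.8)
The plan is to exploit the structural fact that the memorization rule of Algorithm~\ref{alg:memorization} is maximally \emph{conservative}: the only points it labels positive are positively-labeled training points, so $\bc{x : h(x) = 1} \subs \bc{x : f^*(x) = 1}$ and $h$ makes no false positives. First I would use this to pin down the improvement loss exactly. Going through \cref{def:loss-function} case by case for a test point $x$: if $h(x) = 1$ then $f^*(x) = 1$ and $x$ does not move, so no loss; if $h(x) = 0$ and the reaction set $\Delta_h(x)$ contains some $x'$ with $h(x') = 1$, then $f^*(x') = 1$ too since there are no false positives, so still no loss; and if $h(x) = 0$ with $\Delta_h(x) = \bc{x}$, the loss is $\one[f^*(x) = 1]$. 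Hence a point contributes loss precisely when $f^*(x) = 1$ and no positively-labeled training point lies within distance $r$ of it, giving
\[
    \textsc{Loss}_\cD(h, f^*) = \bP_{x\sim\cD}\bb{f^*(x) = 1 \text{ and } x \notin \bcup_{i : y_i = 1} B(x_i, r)} \le \bP_{x\sim \cD^+_\cX}\bb{x \notin \bcup_{i : y_i = 1} B(x_i, r)},
\]
where the inequality conditions on $f^*(x) = 1$ and drops the factor $\bP_\cD[f^*(x) = 1] \le 1$.

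It then remains to bound the right-hand side by $\eps$ with probability $\ge 1 - \gamma$, which is where the covering lemma (\cref{lem:covering}) enters. The one wrinkle is that \cref{lem:covering} is stated for i.i.d.\ samples from $\cD^+_\cX$, whereas Algorithm~\ref{alg:memorization} draws from $\cD$; but conditioned on exactly $k$ of the $m$ draws being positive, those $k$ points are i.i.d.\ from $\cD^+_\cX$. So I would (i) invoke a Chernoff bound, using the hypothesis that $p := \bP_\cD[f^*(x) = 1]$ is a constant, to show that for $m = O\bp{\fr1\beta \log\fr N\gamma}$ with a sufficiently large constant, at least $\Omega\bp{\fr1\beta \log\fr N\gamma}$ of the samples are positive with probability $\ge 1 - \fr\gamma2$ --- enough for \cref{lem:covering} to apply; (ii) condition on that event and apply \cref{lem:covering} to $\cD^+_\cX$ with these positive samples (more samples only enlarge the covered region), yielding $\bP_{\cD^+_\cX}\bb{\bcup_{i:y_i=1} B(x_i, r)} \ge 1 - \eps$ with probability $\ge 1 - \fr\gamma2$; and (iii) union bound. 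Combined with the displayed loss identity, this gives $\textsc{Loss}_\cD(h, f^*) \le \eps$ with probability $\ge 1 - \gamma$.

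The main obstacle is Step~1: correctly reducing the improvement loss of the memorization rule to the uncovered positive mass $\bP_{\cD^+_\cX}[x \notin \bcup_{i:y_i=1} B(x_i,r)]$ by carefully tracking the no-false-positives property against the worst-case tie-breaking in \cref{def:loss-function} (in particular, checking that a false negative that does manage to improve always lands on a genuinely positive point, and that an uncovered true positive contributes exactly loss $1$). Once that reduction is established, the remainder is a routine combination of a binomial concentration bound to handle the random number of positive samples and the already-proved \cref{lem:covering}.
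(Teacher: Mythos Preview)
Your proposal is correct and follows essentially the same approach as the paper: both exploit the no-false-positives property of memorization to reduce the improvement loss to the uncovered positive mass, then invoke \cref{lem:covering}. You are in fact more careful than the paper on one point: the paper applies \cref{lem:covering} directly to the $m$ draws from $\cD$ without justifying why enough of them are positive (i.e., i.i.d.\ from $\cD^+_\cX$), whereas your Chernoff step using the constant-positive-fraction hypothesis makes this explicit.
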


\begin{proof}
    The idea is that with enough samples every ground-truth positive instance in a ball has a sampled positive instance in the same ball that it can improve to. We first claim that $h$ incurs zero improvement loss on points $x$ for which $f^*(x) = 0$. Since $h(x') = 1\imp f^*(x') = 1$ by definition of the memorization learning rule, we must have $h(x) = 0$. If there exists $x'\in \Delta(x)$ for which $h(x') = 1$, then $x$ will improve to some such $x'$ which has $h(x') = f^*(x') = 1$. If there is no such $x'$, then $x$ stays put and $h(x) = f^*(x) = 0$. Hence we only need to consider points $x$ for which $f^*(x) = 1$. If $h(x) = 1$ as well then $x$ does not incur error. For all other points $x$ for which $h(x) = 0$ and $f^*(x) = 1$, namely the false negatives, due to the coverability assumption we claim that for all but $\eps$ fraction of these points there will exist a positively labeled sample in every ball that $x$ can improve to. Let balls $B_1,\lds,B_N$ each with radius $\fr{r}{2}$ and mass $\bP_{\cD^+_\cX}[B_k] \ge \beta$ cover $\cD^+_\cX$. Then by \cref{lem:covering}, after $m = O\bp{\fr{1}{\beta} \log \fr{N}{\gamma}}$ samples, with probability at least $1-\gamma$ there will be at least one positively labeled sample in every ball $B_k$. This implies that least $1-\eps$ fraction of points $x\in \cD^+_\cX$ are within distance $r$ of a sampled positive instance $x'\in \Delta(x), f^*(x') = 1$. All such $x$ can improve to $x'$ and hence $\textsc{Loss}(h,f^*) \le \eps$ as desired.
\end{proof}

\noindent We remark that the the parameter $\beta$, which is the probability mass of each ball, implicitly depends on the improvement radius $r$. For example, if the data distribution is uniform over a bounded set in $\bR^d$, then a ball of radius $r$ has probability mass proportional to $r^d$. Interestingly, \cref{thm:ball} requires only an assumption on the marginal distribution $\cD_\cX$ and is agnostic to the labels and the concept class. In particular, it is possible to improperly learn concept classes with infinite VC dimension so long as \cref{def:covering} is satisfied. Furthermore, it is even possible to achieve \emph{zero error} with high probability in many natural situations, which reinforces the conceptual advance that learning with improvements often yields lower generalization error than standard PAC-learning. For example, \cref{prop:double} implies that if $\cX\subset \bR^d$ has a finite diameter then $\cX$ is coverable with $\eps=0$, resulting in zero error with high probability by \cref{thm:ball}.

\subsection{Sample complexity lower bound}

Even though the memorization learning rule is very simple, it is almost the best one can hope for in improper learning under the coverability assumption. Formally, we show a lower bound for the number of samples needed to improperly PAC-learn with improvements:

\begin{theorem} \label{thm:ball-lower-bound}
    There exists a concept class $\cH$ and a $\bp{0,\beta,\fr{1}{\beta}}$-coverable data distribution $\cD$ such that $\Omega\bp{\fr{1}{\beta} \log \fr{1}{\beta}}$ samples are necessary to achieve $\beta$ improvement loss with high probability.
\end{theorem}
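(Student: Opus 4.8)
This is a coupon‑collector style lower bound, and the real work is choosing the concept class so that the improvement dynamics cannot rescue the learner. The plan is to build $n := 1/\beta$ mutually non‑interfering ``gadgets,'' each hiding one uniformly random bit learnable only by sampling from that gadget. Concretely, fix a radius $r$ and centers $x_1,\dots,x_n\in\bR$ pairwise at distance more than $3r$, and set $\cX=\bigcup_{k=1}^n B(x_k,r)$, so each improvement region $\Delta(x_k)=B(x_k,r)$ coincides with the $k$‑th ball and the balls do not interact. Let $\cD_\cX$ be uniform on $\{x_1,\dots,x_n\}$; since each ball $B(x_k,r/2)$ has mass exactly $\beta$ and there are $1/\beta$ of them, $\cD$ is $(0,\beta,\tfrac1\beta)$‑coverable, matching the statement. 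Take $\cH=\{h_c : c\in\{0,1\}^n\}$ where $h_c$ is constant, equal to $c_k$, on $B(x_k,r)$, so each concept hides exactly one bit per ball, and let the target be $f^*=h_s$ with $s$ uniform on $\{0,1\}^n$ (to be restricted to balanced $s$; see below).

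The crux is a per‑gadget dichotomy. Fix an algorithm's sample $S$ and internal randomness, let $h$ be its output, and let $U$ be the set of balls from which $S$ drew no point. For $k\in U$, independence of the coordinates of $s$ means $S$ carries no information about $s_k$, so conditionally $s_k$ is uniform and independent of $h$; moreover $\textsc{Loss}(x_k;h,f^*)$ depends only on the restrictions of $h$ and $f^*\equiv s_k$ to $B(x_k,r)=\Delta(x_k)$. A brief case check on $h|_{B(x_k,r)}$ gives: if $h\equiv 0$ on $B(x_k,r)$ then $x_k$ cannot improve and the loss at $x_k$ is $\one[s_k=1]$; otherwise $h$ is positive somewhere in $B(x_k,r)$, and then $x_k$ is itself a false positive or improves to one, so the loss is $\one[s_k=0]$. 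Either way $\mathbb{E}[\textsc{Loss}(x_k;h,f^*)\mid S,\,k\in U]=\tfrac12$. Summing, $\mathbb{E}\,\textsc{Loss}_\cD(h,f^*)\ge \tfrac\beta2\,\mathbb{E}|U| = \tfrac12(1-\beta)^m$, which exceeds $\beta$ whenever $m\le c\,\tfrac1\beta\log\tfrac1\beta$ for a small enough constant $c>0$ and $\beta$ below a constant (using $(1-\beta)^m\ge e^{-2\beta m}\ge \beta^{2c}$).

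To upgrade this to high‑probability failure, let $B=|\{k:\textsc{Loss}(x_k;h,f^*)=1\}|$, so $\textsc{Loss}_\cD(h,f^*)=\beta B$ and failure ($>\beta$) is exactly $B\ge 2$. First, $\mathbb{E}|U|=n(1-\beta)^m$ is large for $m\le c\,\tfrac1\beta\log\tfrac1\beta$ and small $\beta$; the events $\{k\in U\}$ are negatively correlated (as in coupon collecting), so $\mathrm{Var}|U|\le \mathbb{E}|U|$ and Chebyshev gives $|U|\ge\tfrac12\mathbb{E}|U|$ with probability $1-O(1/\mathbb{E}|U|)$. Conditioned on $|U|=u$, the bits $(s_k)_{k\in U}$ are i.i.d.\ uniform and independent of $h$, so by the dichotomy $B$ stochastically dominates $\mathrm{Bin}(u,\tfrac12)$ and $\Pr[B\le 1\mid |U|=u]\le(u+1)2^{-u}$. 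Combining, $\Pr_{s,S}[\textsc{Loss}_\cD(h,f^*)>\beta]\to 1$ as $\beta\to 0$, and averaging over $s$ produces a fixed $f^*\in\cH$ — which a Chernoff bound lets us take with a constant positive‑label fraction, so the bound applies even inside the regime of the memorization upper bound of \cref{thm:ball} — on which every algorithm with $m\le c\,\tfrac1\beta\log\tfrac1\beta$ samples has improvement loss $>\beta$ with probability approaching $1$. Hence $\Omega(\tfrac1\beta\log\tfrac1\beta)$ samples are necessary.

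\textbf{The main obstacle} is the per‑gadget dichotomy: one must verify that no improper labeling of an unsampled ball can hedge against both values of the hidden bit — in particular that the agents' ability to improve, which is exactly what makes the upper bound strong, does not help the learner here, since any positive region placed inside an unsampled ball $B(x_k,r)$ is a genuine false positive precisely when $s_k=0$ and strands $x_k$ precisely when $s_k=1$. Everything else is a routine coupon‑collector concentration argument (negative association of the uncovered‑ball events plus a binomial tail), which I would keep brief.
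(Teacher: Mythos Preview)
Your proposal is correct and follows essentially the same approach as the paper: both place $N=1/\beta$ well-separated points, take the uniform distribution over them, use the concept class of all $2^N$ labelings, argue that any unsampled gadget forces a coin-flip error regardless of the (improper) predictor's labeling of the surrounding ball, and invoke coupon collector. Your version differs only cosmetically---you make the concepts constant on each ball rather than only on the centers, and you spell out the high-probability upgrade (negative association plus a binomial tail) and the averaging-to-a-fixed-$f^*$ step more carefully than the paper, which simply appeals to ``a standard coupon-collector analysis.''
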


\begin{proof}
    Consider a data distribution $\cD$ that consists of $N = \fr{1}{\beta}$ distinct points each with probability mass $\beta$ such that the pairwise distance between points is greater than the improvement radius $r$. Let the concept class $\cH$ consist of all $2^N$ possible labelings of these $N$ points, with the rest of $\cH$ being labeled negative. By construction $\cD$ is trivially $\bp{0,\beta,\fr{1}{\beta}}$-coverable. We claim that sampling every point is necessary to achieve $\fr{1}{\beta}$ error. Assume for contradiction that there is an unsampled point $x^*$. Since $\cH$ consists of all possible labelings of the points, $x^*$ could be labeled negative or positive. Let $h$ be the predictor that the learning algorithm $\cA$ outputs and let $f^*$ be the ground-truth concept. We consider two cases. If the predictor $h$ does not label any point $x$ with $\norm{x - x^*}_2 \le r$ positive, then $x^*$ is misclassified when $f^*(x^*) = 1$ since there is no point in $\Delta(x^*)$ that $x^*$ can improve to. Otherwise, the predictor $h$ labels some point $x$ with $\norm{x - x^*}_2 \le r$ positive. Then when $f^*(x^*) = 0$, $x^*$ will move to $x$, but the ground truth is $f^*(x) = 0$ and hence $x^*$ will be misclassified. Both cases yield a contradiction if there is an unsampled point, thus proving the claim. By a standard coupon-collector analysis, $m = \Omega(N\log N) = \fr{1}{\beta}\log \fr{1}{\beta}$ samples are necessary to sample every point.
\end{proof}

\noindent Note that our upper bound (\cref{thm:ball}) can be generalized to any improvement function where $\Delta(x) \sups \bc{x'\in \bR^d\mid \norm{x' - x}_2 \le r},$ namely the improvement region for any $x$ contains a ball of radius $r$. Additionally, our lower bound (\cref{thm:ball-lower-bound}) can be generalized to any improvement function where $\Delta(x) \subs \bc{x'\in \bR^d\mid \norm{x' - x}_2 \le r},$ namely the improvement region for any $x$ is contained a ball of radius $r$. These two observations imply that our results are robust to the exact shape of the improvement set. Since we should think of $N$ in \cref{thm:ball} of order $\Theta\bp{\fr{1}{\beta}}$, which corresponds to $N$ balls each of mass $\beta$ covering all but $\eps$ fraction of the instance space $\cX$, \cref{thm:ball-lower-bound} shows that \cref{thm:ball} is tight up to logarithmic factors in $\beta$. This implies that the very simple memorization rule learns with improvements with a near-optimal sample complexity in the improper setting under a coverability assumption.

\subsection{Comparison to strategic classification}

One interesting observation is that the memorization learning rule outputs a predictor that has monotonically decreasing improvement loss as the improvement region enlarges. This is because there are no false positives, and if false positives improve they must do so to a point with ground-truth positive label. A larger improvement set gives false positives weakly more points to improve to and hence can only reduce error. On the other hand, a larger movement set in strategic classification is generally expected to increase error, as this gives more chances for a negative point to mimic a positive point. For example, in an extreme setting where $\cX$ is bounded, the movement radius $r$ is large enough to cover the entirety of $\cX$, and there is at least one point with positive label, then \emph{every} negative point incurs classification error.

\section{Learning linear separators optimally under bounded noise} \label{sec:noise}

As mentioned in the introduction, there is a large and growing literature that studies PAC-learning with different types of label noise \citep{Balcan2020NoiseIC}: random classification noise \citep{bylander1994LearningLT,Blum1996APA,Cohen1997LearningNP}, Massart noise \citep{awasthi2015efficient,awasthi2016learning,Diakonikolas2019DistributionIndependentPL,Chen2020ClassificationUM,Diakonikolas2024ANA}, malicious noise \citet{Kearns1988LearningIT,Klivans2009LearningHW,Awasthi2013ThePO,awasthi2017power}, and nasty  noise \citep{Bshouty1999PACLW,Diakonikolas2017LearningGC,Balcan2022RobustlyreliableLU}. Our understanding of learning with noise in the presence of strategic or improving agents is far more limited. \citet{braverman_et_al} study learning in the presence of strategic agents with feature noise but no label noise. In contrast, we focus here on label noise. We develop optimal algorithms for learning linear separators in the improvements setting with bounded label noise under isotropic log-concave distributions.

Concretely, as before we have a distribution $\cD$ over the points in $\cX$. However, the learner does not see perfect labels $f^*(x)$ in the training sample. Instead, the labels $y\in\{0,1\}$ are given by a noisy label distribution $y\mid x\sim \cN$. An example is when $y$ comes from crowdsourced data, where one typically assumes that for any given $x$ the majority of labelers, but not all of them, will label the point correctly. Formally, $f_{\text{bayes}}^*(x)=\text{sign}(\mathbb{E}[y\mid x]-\frac{1}{2})$. The {\it bounded} or {\it Massart noise} model has a parameter $\nu<\frac{1}{2}$ corresponding to an upper bound on the noise of any point $x$, i.e.\ $\nu(x)=\mathbb{E}_{\cN}[\one[y\ne f_{\text{bayes}}^*(x)])\mid x]\le \nu$. A special case is the {\it random classification noise} (RCN), where all the labels of all points are flipped by equal probability $\nu<\frac{1}{2}$, or $\mathbb{E}_{\cN}[\one[y\ne f_{\text{bayes}}^*(x)])\mid x]= \nu$.

In the presence of noise, the goal is to minimize the expected loss which modifies the loss (\cref{def:loss-function}) in the noiseless case by considering movement of the agent $x$ to the point in the reaction set $\Delta_h(x)$ which has the largest expected loss (expectation is over the noise as below): $$\label{def:loss-function-noise}
    \textsc{Loss}_{\cN}(x;h)=%\mathbb{E}_{x\sim\cD}\left[
    \max_{x'\in \Delta_{h}(x)} \mathbb{E}_{y'\mid x'\sim\cN}\left[h\left(x'\right)\neq y'\right].$$

\subsection{General reduction to PAC-learning with noise} 

We give a general reduction from learning linear separators with noise in the improvements settings to learning with noise in the standard PAC-setting:

\begin{theorem} \label{thm:noise-reduction}
    Suppose the improvement region for each point \( x \in \mathcal{X}=\mathbb{R}^d \) is given by \( \Delta(x) = \{ x' \mid \arccos(\langle x, x' \rangle) \leq r \} \) and $\cH$ is the class of  homogeneous linear separators  $\mathcal{H} = \{x \mapsto \operatorname{sign}(w^Tx):w \in \mathbb{R}^d\}$. Let $\cD$ be an isotropic log-concave distribution  over the instance space $\cX$, realizable by some $f_{\text{bayes}}^*\in\cH$.  Suppose the learner has access to noisy labels with bounded noise $\nu(x)=\mathbb{E}_{\cN}[\one[y\ne f_{\text{bayes}}^*(x)])\mid x]\le \nu$. Finally, suppose that for any $\eps,\delta\in(0,\frac{1}{2})$ given access to a noisy sample of size $m=p(\frac{1}{\eps},\log \frac{1}{\delta})$, there is a proper learner in the standard PAC-setting that outputs $\hat{h}$ with error at most $\nu+\eps$, with probability $1-\delta$ over the draw of the sample. For any $\delta\in(0,\frac{1}{2})$, a noisy sample of size $m=p\left(O\left(\frac{1}{(1-2\nu)r}\right),\log\frac{1}{\delta}\right)$ is sufficient to PAC-learn with improvements an improper classifier $\hat{f}$ such that for any $x\in\cX$, $\textsc{Loss}_{\cN}(x;\hat{f})=\Pr_{\cN}[y\ne f_{\text{bayes}}^*(x)]$ with probability at least $1-\delta$. That is, our classifier is {\it Bayes optimal}.
\end{theorem}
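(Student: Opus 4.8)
The plan is to convert the hypothesized standard-PAC learner for homogeneous halfspaces into a \emph{conservative improper} classifier and then exploit the asymmetry of the improvement loss: a classifier with no false positives never causes an improving agent to land at a misclassified point, so the only error it can incur is the irreducible label noise. Write $\theta(u,v)$ for the angle between $u$ and $v$, note that $\Delta(x)$ is exactly the spherical cap of angular radius $r$ around $x$, and let $w^*$ be the normal of $f^*_{\text{bayes}}$ (a homogeneous halfspace by realizability). First I would run the base learner at accuracy $\epsilon' = \Theta\bp{(1-2\nu)\,r}$ and confidence $\delta$, which by assumption uses $m = p\bp{O\bp{\tfrac{1}{(1-2\nu)\,r}},\,\log\tfrac{1}{\delta}}$ noisy samples and, with probability $\ge 1-\delta$, returns $\hat h = \operatorname{sign}(\hat w^\top x)$ with $\Pr_{(x,y)}\bb{\hat h(x)\ne y}\le \nu+\epsilon'$. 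Under bounded noise this excess error controls the disagreement with the Bayes rule (exactly, via $\text{error} = \nu + (1-2\nu)\Pr[\hat h\ne f^*_{\text{bayes}}]$, in the RCN case, and through standard arguments otherwise), giving $\Pr_{x\sim\cD}\bb{\hat h(x)\ne f^*_{\text{bayes}}(x)} = O\bp{\tfrac{\epsilon'}{1-2\nu}}$; since $\cD$ is isotropic log-concave, the $\cD$-mass of the region where two homogeneous halfspaces disagree is within constant factors of the angle between their normals, so with the hidden constant in $\epsilon'$ chosen suitably we obtain $\theta(\hat w, w^*)\le r/2$.

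Next I would output the conservative classifier $\hat f(x)=\one\bb{\,\langle \hat w, x\rangle \ge \sin(r/2)\,\norm{x}_2\,}$, i.e.\ $\hat f(x)=1$ exactly when $\theta(x,\hat w) < \pi/2 - r/2$; this is improper because its decision boundary is a cone, not a hyperplane. Two elementary spherical-geometry facts follow from $\theta(\hat w,w^*)\le r/2$ and the triangle inequality for angles. \textbf{(a) No false positives:} if $\hat f(x)=1$ then $\theta(x,w^*) < (\pi/2 - r/2) + r/2 = \pi/2$, so $f^*_{\text{bayes}}(x)=1$. \textbf{(b) Every true positive can still improve:} if $f^*_{\text{bayes}}(x)=1$ then $\theta(x,\hat w) < \pi/2 + r/2$, so rotating $x$ toward $\hat w$ by $\min(r,\theta(x,\hat w))\le r$ — a move lying inside $\Delta(x)$ — reaches a point $x'$ with $\theta(x',\hat w) < \pi/2 - r/2$, i.e.\ $\hat f(x')=1$.

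The loss accounting is then a short case analysis on a fixed $x$. If $\hat f(x)=1$ the agent does not move, $f^*_{\text{bayes}}(x)=1$ by (a), and $\textsc{Loss}_{\cN}(x;\hat f)=\Pr_{\cN}\bb{y\ne 1\mid x}=\Pr_{\cN}\bb{y\ne f^*_{\text{bayes}}(x)\mid x}$. If $\hat f(x)=0$ and $x$ cannot improve, then (b) forces $f^*_{\text{bayes}}(x)=0$ and $\textsc{Loss}_{\cN}(x;\hat f)=\Pr_{\cN}\bb{y\ne 0\mid x}=\Pr_{\cN}\bb{y\ne f^*_{\text{bayes}}(x)\mid x}$. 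If $\hat f(x)=0$ and $x$ improves, it moves (worst case) to the $x'\in\Delta(x)$ with $\hat f(x')=1$ maximizing $\Pr_{\cN}\bb{y\ne 1\mid x'}$; by (a) such $x'$ has $f^*_{\text{bayes}}(x')=1$, so $\hat f$ agrees with $f^*_{\text{bayes}}$ at $x'$ and the incurred loss is the Bayes noise at $x'$, which is $\le \nu<\tfrac12$. In every case the agent ends at a point where $\hat f$ coincides with $f^*_{\text{bayes}}$, so the only error is the irreducible label noise: for random classification noise this equals $\nu$ at every such point, which gives the claimed pointwise identity $\textsc{Loss}_{\cN}(x;\hat f)=\Pr_{\cN}\bb{y\ne f^*_{\text{bayes}}(x)}$, and integrating over $x\sim\cD$ yields the Bayes-optimal population loss $\tsf{OPT}=\mathbb{E}_{x\sim\cD}\bb{\Pr_{\cN}[y\ne f^*_{\text{bayes}}(x)\mid x]}$ in general.

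The main obstacle is the calibration in the first step: the leftover margin $r/2$ must be \emph{simultaneously} small enough that pushing the learned boundary inward introduces no false positives (fact (a)) and large enough that true positives near the decision boundary can still improve back across it (fact (b)). Both requirements collapse to the single condition $\theta(\hat w,w^*)\le r/2$, which is exactly why the accuracy demanded of the base learner must scale like $(1-2\nu)\,r$ and hence why $\tfrac{1}{(1-2\nu)\,r}$ appears in the sample complexity; making this quantitative is where the isotropic-log-concave comparison between halfspace disagreement mass and angle is needed. Everything after that is bookkeeping.
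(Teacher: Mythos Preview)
Your argument is essentially the paper's: run the assumed base learner, convert the excess-error bound into an angle bound via the isotropic-log-concave disagreement-versus-angle comparison, shrink the learned halfspace inward to obtain a conservative improper classifier with no false positives, and then do the three-way case split showing the agent always lands at a point where $\hat f$ and $f^*_{\text{bayes}}$ agree. Your explicit cone with margin $r/2$ (in place of the paper's intersection over all halfspaces within angle $r$ of $\hat h$) and your care in separating the exact pointwise identity under RCN from the population-level Bayes optimality under general bounded noise are tidier than the paper's presentation, but the structure of the argument is the same.
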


\begin{proof}
    
    We first give an overview. Suppose there is proper learning algorithm that attains low error rate given sufficiently many samples from the distribution. For nice data distributions (e.g.\ for uniform, Gaussian, or log-concave distributions), in the case of linear separators, small error implies a small angle from the target concept $f_{\text{bayes}}^*$~\citep{Lovsz2003TheGO,Balcan2020NoiseIC}. Our approach is to run these algorithms on sufficiently many samples such that the angle $\theta$ between $f_{\text{bayes}}^*$ and the learned $\hat{f}$ is small relative to the agent movement budget $r$. Then we use $\hat{f}$ to construct a conservative (improper) classifier $\tilde{f}$ such that all agents between $f_{\text{bayes}}^*$ and $\tilde{f}$ are able to move to get positively classified according to $\tilde{f}$, but there is no point classified negative by $f_{\text{bayes}}^*$ but positive by $\tilde{f}$.
    
    Now we give the formal proof. Suppose we are given a noisy sample $S$ with bounded noise $\le\nu$. To construct the classifier in the improvements setting, we start with a classifier $\hat{h}\in\mathcal{H}$ that achieves low excess error $\eps$ in the standard PAC-learning setting. Using a well-known property of bounded noise (see e.g.\ Section 5.1 in \citet{Balcan2020NoiseIC}), the disagreement of $\hat{h}$ with the Bayes optimal classifier $f_{\text{bayes}}^*$ can be upper bounded as
    \begin{align*}
        \Pr_{\cD}[\hat{h}(x)\ne f_{\text{bayes}}^*(x)]\le \frac{\eps}{1-2\nu}.
    \end{align*}
    Now, for $\cD$ isotropic and log-concave, this further implies~\citep{Lovsz2003TheGO,Balcan2020NoiseIC} that there is an absolute constant $C$ such that
    \begin{align*}
        \theta(\hat{h},f_{\text{bayes}}^*)\le C\cd \Pr_{\cD}[\hat{h}(x)\ne f_{\text{bayes}}^*(x)]\le \frac{C\eps}{1-2\nu},
    \end{align*}
    where $\theta(h_1,h_2)$ is angle between the normal vectors of the linear separators $h_1$ and $h_2$. We set $\hat{\theta}=\frac{C\eps}{1-2\nu}=r$ and define $\hat{\cH}:=\{h\in\cH\mid \theta(\hat{h},h)\le \hat{\theta}\}$. Note that $f_{\text{bayes}}^*(x)\in \hat{\cH}$. Now, we define $P:=\{x\in\cX\mid h(x)=1 \text{ for each } h\in \hat{\cH}\}$, the positive agreement region of classifiers in $\hat{\cH}$. We set our improper classifier $\hat{f}=\one[x\in P]$. We will now bound the error $\textsc{Loss}_{\cD,\cN}(\hat{f})$ of the above classifier in the improvements setting. Note that if $\hat{f}(x)=1$ then $f_{\text{bayes}}^*(x)=1$ by the above construction. These points do not move and the error equals $\Pr_{\cN}[f_{\text{bayes}}^*(x)\ne y]$.

    If $\hat{f}(x)=0$, we have two cases. Either, there is a point $x'$ with $\arccos(\langle x, x' \rangle) \leq r$ and $\hat{f}(x')=1$. In this case, the agent moves to some such  $x'$. But since $\hat{f}(x')=1$, we also have $f_{\text{bayes}}^*(x')=1$ and again $\Pr_{y'\mid x'\sim \cN}[\hat{f}(x')\ne y']=\Pr_{y'\mid x'\sim \cN}[f_{\text{bayes}}^*(x')\ne y']$. Else, the distance of $x$ to any positive point $x'$ satisfies $\arccos(\langle x, x' \rangle) > r=\hat{\theta}$. Then, any $h\in\hat{\cH}$ (including $f_{\text{bayes}}^*$) must classify $x$ as negative. In this case, the agent does not move and again the error of $\hat{f}$ matches that of $f_{\text{bayes}}^*$.

    Put together, the above cases imply that for any point $x$, $\textsc{Loss}_{\cN}(x;h)=\Pr_{\cN}[f_{\text{bayes}}^*(x)\ne y]$.
\end{proof}

\subsection{Instantiation for well-studied noise classes.} Theorem \ref{thm:noise-reduction} implies sample complexity bounds for Bayes optimal learning with improvements in the presence of bounded noise by simply plugging in the corresponding sample complexity bounds from the standard PAC learning setting. For instance, for RCN the sample complexity is well-known to be $\tilde{O}\bp{\fr{d}{\eps}}$ \citep{Balcan2020NoiseIC}, implying $\tilde{O}\left(\frac{d}{r(1-2\nu)}\right)$ for optimal learning with improvements. Similarly, for Massart noise \citep{Massart2007RiskBF}, Theorem \ref{thm:noise-reduction} implies a $\tilde{O}\left(\frac{d}{r^2(1-2\nu)^2}\right)$ sample complexity bound. Note that while prior work on standard PAC learning only achieves  $\tsf{OPT} + \eps$, we achieve exactly \tsf{OPT}, the error of $f_{\text{bayes}}^*(x)$.

\subsection{Discussion}

To the best of our knowledge we are the first to study and design classifiers for \emph{label} noise when agents react to the classifier. As in \cref{sec:ball}, in the strategic improvements model we can achieve \emph{smaller} error than standard PAC-learning for learning linear separators with noise, in this case reaching the Bayes optimal error. \citet{braverman_et_al} study designing classifiers under \emph{feature} noise. One surprising result of their work is that it is sometimes possible for the classifier to achieve \emph{higher} accuracy when the agents' features are noisy under strategic classification. This is counterintuitive because noisy features cannot help when there is no strategic manipulation. It is an interesting open question whether feature noise can reduce error under improvements as well.
\section{Online learning on a graph} \label{sec:graph}

\citet{attias2025pac} study the sample complexity of learning with improvements in a general discrete graph model in the statistical learning setting. Here we will study mistake bounds in the natural online learning version of their model. The nodes of the graph correspond to agents (points) and the (undirected, unweighted) edges govern the improvement function, i.e.\ the agents can move to their neighboring nodes in order to potentially improve their classification.

Formally, let \( G = (V, E) \) denote an undirected graph. The vertex set \(V = \{x_1, \dots, x_n\}\) represents a fixed collection of \(n\) points corresponding to a finite instance space $\mathcal{X}$. 
The edge set \( E \subseteq V \times V \) captures the adjacency information relevant for defining the improvement function. More precisely, for a given vertex \( x \in V \), the improvement set of \( x \) is given by its neighborhood in the graph, i.e.\
$\Delta(x) = \{x' \in V \ | \ (x, x') \in E\}$. 
Let \( f^*: V \to \{0, 1\} \) represent the target labeling (partition) of the vertices in the graph \( G \).

In the online setting, for each round $t=1,2,\dots$, the learner sees a node $x^{(t)}\in V$ and must make a prediction $h^{(t)}$ for all nodes. %can we switch the order here?
The true label $f^*(x^{(t)})$ is then revealed and the learner is said to suffer a {\it mistake} if there there is some $x'\in\Delta(x^{(t)})$ such that $h^{(t)}(x')\ne f^*(x')$. The learner only knows whether a mistake was made, without learning about $x'$. 
The goal of the learner is to minimize the total number of mistakes across all rounds.
Our main contributions are new algorithms for online learning with improvements in both the realizable and agnostic settings.

\subsection{Realizable online learning}

In the {\it realizable} setting, \( f^*\in \mathcal{H} \), the concept space consists of valid labelings that the learner is allowed to output. In the standard learning setting (where agents cannot improve), the majority vote algorithm (which predicts using the majority label of consistent classifiers in $\mathcal{H}$, and discards all the inconsistent classifiers at every mistake) achieves a mistake upper bound of $\log |\mathcal{H}|$. We first construct an example where the majority vote algorithm can result in an unbounded number of mistakes in the learning with improvements setting.

\begin{example} \label{ex:majority-fails}
    Let $G$ be the star graph, with leaf nodes $x_1,\dots,x_{\Delta}$ for $\Delta> 2$, and the center node $x_{\Delta+1}$. Let $\mathcal{H}=\{h_1,\dots,h_{\Delta}\}$ and $f^* = h_1$, where 
    \begin{align*}
        h_i(x_j)=
        \case{
            \one[i\ne j] & \teif j\in[\Delta] \\
            0 & \teif j=\Delta+1.
        }
    \end{align*}
    The standard majority vote algorithm uses a majority vote to make the prediction at each node. At time $t=1$, say the learner sees the center node $x^{(1)}=x_{\Delta+1}$. We have that 
    \begin{align*}
        \hat{h}^{(1)}(x_j)=
        \case{
            1 & \teif j\in[\Delta]\\
            0 & \teif j=\Delta+1.
        }
    \end{align*}
    The learner suffers a mistake, as there is $x'=x_1$ such that $f^*(x')=0$ but $\hat{h}^{(1)}(x')=1$. However, all the classifiers agree with $f^*$ on $x_{\Delta+1}$, and no classifier is discarded. Thus, if the online sequence of nodes simply consists of repeated occurrences of the center node, that is, $x^{(t)}=x_{\Delta+1}$ for all $t$, then the learner using the standard majority vote algorithm suffers a mistake on every round.
\end{example}

A risk-averse modification of the majority vote algorithm, requiring a certain super-majority for positive classification,  can avoid the unbounded mistakes by the standard majority vote algorithm. This is in stark contrast to the online learning algorithm of \citet{ahmadi2023fundamental} for the strategic classification setting, where a super-majority is used for negative classification. %\todo{Better intuitive understanding of this? What if there are both improving and strategic agents? Can we show an example where the biased version of Ahmadi et al. fails here?} 
Another modification from standard majority vote (see Algorithm \ref{alg:ra-halving}) is a change in the way classifiers are discarded when a mistake is made, taking the predictions on the neighboring nodes into account.

\begin{algorithm}[t]
\caption{Risk-averse majority vote}
\label{alg:ra-halving}
\textbf{Input}: Concept class $\mathcal{H}$, maximum degree of the graph $\Delta_G$.\\
% \textbf{Parameters}: Node function class $\cF$, splitting criterion $g_\rho\in \cG_{\cP}$, tree size $t$\\
% \textbf{Output}: Decision tree $T$
\begin{algorithmic}[1] %[1] enables line numbers
\STATE Initialize $H\gets\mathcal{H}$.
\FOR{$t=1,2,\dots$}
\STATE For each node $x$, set 

    \begin{align*}
        \hat{h}^{(t)}(x)=\begin{cases*}
            1 & if $|\{h\in H\mid h(x)=1\}|\ge\frac{\Delta_G}{\Delta_G+1}|{H}|$\\
            0 & otherwise.
        \end{cases*}
    \end{align*}
\STATE $\Delta^+\gets\{x\in \Delta(x^{(t)})\mid \hat{h}^{(t)}(x)=1\}$.
\STATE $H'\gets\{h\in H\mid h(x')=1 \text{ for all }x'\in\Delta^+\}$.
\IF{there is a mistake, and $\hat{h}^{(t)}(x^{(t)})=0$}
\STATE If $|\Delta^+|=0$, $H\gets\{h\in H\mid h(x^{(t)})=1\}.$
\STATE Else, $H\gets H\setminus H'$.
\ENDIF
\STATE If there is a mistake and $\hat{h}^{(t)}(x^{(t)})=1$, $H\gets\{h\in H\mid h(x^{(t)})=0\}$.%$, h(x')=1 \text{ for some } x'\in\Delta(x^{(t)})\}$.
\ENDFOR
\end{algorithmic}
\end{algorithm}

\begin{theorem}\label{thm:ra-realizable}
    Algorithm \ref{alg:ra-halving} makes at most $(\Delta_G+1)\log |\mathcal{H}|$ mistakes, where $\Delta_G$ is the maximum degree of a vertex in $G$.
\end{theorem}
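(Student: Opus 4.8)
The plan is to run a potential-function argument on $|H|$, the size of the surviving concept set, and show that every mistake shrinks $|H|$ by a constant factor $\frac{\Delta_G}{\Delta_G+1}$ (equivalently, that we never discard more than a $\frac{\Delta_G}{\Delta_G+1}$-fraction but always discard at least a $\frac{1}{\Delta_G+1}$-fraction on a mistake, while never discarding $f^*$). Since $|H|$ starts at $|\mathcal{H}|$ and must stay $\ge 1$ because $f^*$ is never eliminated, the number of mistakes is at most $\log_{(\Delta_G+1)/\Delta_G}|\mathcal{H}| \le (\Delta_G+1)\log|\mathcal{H}|$, using $\log\frac{\Delta_G+1}{\Delta_G} \ge \frac{1}{\Delta_G+1}$. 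So the proof has two ingredients: (i) $f^*\in H$ is an invariant, and (ii) each mistake removes at least a $\frac{1}{\Delta_G+1}$ fraction of $H$.

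For (i), I would check each of the three discard rules in turn. If $\hat h^{(t)}(x^{(t)})=1$ and there is a mistake, then since $f^*\in H$ agrees with $\hat h^{(t)}$ at any node where a $\frac{\Delta_G}{\Delta_G+1}$-supermajority holds — wait, that's not automatic. The mistake with $\hat h^{(t)}(x^{(t)})=1$ means $f^*(x^{(t)})=0$ (a false positive on the revealed node itself forces the error), so we keep exactly $\{h: h(x^{(t)})=0\}\ni f^*$; fine. If $\hat h^{(t)}(x^{(t)})=0$ and there's a mistake with $|\Delta^+|=0$: then no neighbor of $x^{(t)}$ is labeled $1$ by $\hat h^{(t)}$, so $x^{(t)}$ does not move, so the mistake is at $x^{(t)}$ itself, forcing $f^*(x^{(t)})=1$; we keep $\{h: h(x^{(t)})=1\}\ni f^*$. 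If $\hat h^{(t)}(x^{(t)})=0$, mistake, and $|\Delta^+|>0$: the agent $x^{(t)}$ moves to some $x'\in\Delta^+$ and the error must be that $f^*(x')=0$ for some $x'\in\Delta^+$ (since $\hat h^{(t)}(x')=1$ there); because $x'\in\Delta(x^{(t)})$ and $(x^{(t)},x')\in E$, the concept $f^*$ has $f^*(x')=0$, so $f^*\notin H'=\{h: h(x')=1\ \forall x'\in\Delta^+\}$, hence $f^*$ survives $H\setminus H'$. This is the step I expect to require the most care: pinning down exactly which node witnesses the mistake in the $\hat h^{(t)}(x^{(t)})=0$, $|\Delta^+|>0$ case, and arguing that the discarded set $H'$ is precisely the consistent classifiers that would have labeled that witness positively, so $f^*$ is safely outside it.

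For (ii), I need each discard to remove a $\ge\frac{1}{\Delta_G+1}$ fraction. In the $\hat h^{(t)}(x^{(t)})=1$ case, the supermajority rule means $|\{h\in H: h(x^{(t)})=1\}|\ge\frac{\Delta_G}{\Delta_G+1}|H|$, and all of these are discarded, so at least a $\frac{\Delta_G}{\Delta_G+1}\ge\frac{1}{\Delta_G+1}$ fraction goes. In the $\hat h^{(t)}(x^{(t)})=0$, $|\Delta^+|=0$ case, $\hat h^{(t)}(x^{(t)})=0$ means fewer than $\frac{\Delta_G}{\Delta_G+1}|H|$ classifiers label $x^{(t)}$ positively, i.e. more than $\frac{1}{\Delta_G+1}|H|$ label it negatively, and we discard exactly those, giving the bound. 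In the $|\Delta^+|>0$ case we discard $H'$; here I use the witness node $x'\in\Delta^+$ from part (i): since $x'\in\Delta^+$ we have $\hat h^{(t)}(x')=1$, so $|\{h\in H: h(x')=1\}|\ge\frac{\Delta_G}{\Delta_G+1}|H|$, and $H'\subseteq H$ is a superset of — no, $H'=\{h: h(x')=1\ \forall x'\in\Delta^+\}$ could be smaller than $\{h\in H: h(x')=1\}$ for the single witness. This is the genuine subtlety: I would instead argue that $H\setminus H' = \{h\in H: \exists x'\in\Delta^+, h(x')=0\}$, and I need this to be large. I think the right fix is to note that for the specific witness $x'$ we actually have $\hat h^{(t)}(x')=1$ only because of the supermajority, but that doesn't immediately bound $|H'|$ from below; rather, one should observe that the \emph{false-positive} witness is a node where $f^*$ disagrees, and re-examine whether the algorithm as stated truly guarantees progress here — possibly the intended reading is that $\Delta^+$ always contains a node on which strictly more than $\frac{1}{\Delta_G+1}|H|$ of the surviving classifiers are inconsistent, which follows if $|\Delta^+|\le\Delta_G$ and we pick the witness maximizing the number of inconsistent classifiers among $\Delta^+$, then bound via an averaging / pigeonhole over at most $\Delta_G$ neighbors. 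I would make that averaging step precise: some $x'\in\Delta^+$ has at least $\frac{1}{|\Delta^+|}\ge\frac{1}{\Delta_G}$ of the classifiers-to-be-removed, and combined with the fact that the total to be removed is a constant fraction, yields the $\frac{1}{\Delta_G+1}$ bound. Closing this averaging argument cleanly is the main obstacle; everything else is bookkeeping.
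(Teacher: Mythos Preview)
Your overall architecture is exactly what the paper does: a potential argument on $|H|$, the invariant $f^*\in H$, and a uniform $\frac{1}{\Delta_G+1}$-fraction drop per mistake. Your argument for the invariant (i) is correct in all three branches.

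The gap is in (ii), the $|\Delta^+|>0$ case. You correctly note that the algorithm discards $H'$, but then write ``$H\setminus H' = \{h\in H:\exists x'\in\Delta^+,\ h(x')=0\}$, and I need this to be large'' --- that is the wrong direction. Since what gets removed is $H'$, progress requires $|H'|\ge \frac{|H|}{\Delta_G+1}$, i.e.\ $|H\setminus H'|$ must be \emph{small}. Your subsequent averaging idea is aimed at lower-bounding $|H\setminus H'|$, which is not what you want and cannot close the argument.

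The fix is a one-line union bound, and this is exactly what the paper does. For every $x'\in\Delta^+$ we have $\hat h^{(t)}(x')=1$, so by the supermajority rule $|\{h\in H: h(x')=0\}|<\frac{|H|}{\Delta_G+1}$. Since $|\Delta^+|\le \Delta_G$,
\[
|H\setminus H'| \;=\; \Bigl|\bigcup_{x'\in\Delta^+}\{h\in H: h(x')=0\}\Bigr|
\;<\; |\Delta^+|\cdot\frac{|H|}{\Delta_G+1}
\;\le\; \frac{\Delta_G}{\Delta_G+1}\,|H|,
\]
hence $|H'|>\frac{|H|}{\Delta_G+1}$. Note that the witness $x'$ with $f^*(x')=0$ from part (i) plays no role here; it is used only to certify $f^*\notin H'$. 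With this correction, your proof matches the paper's.
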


\begin{proof}
    Suppose there is a mistake on round $t$. If $\hat{h}^{(t)}(x^{(t)})=f^*(x^{(t)})=1$, there is no mistake as the agent does not move at all. We consider three cases.
    \begin{itemize}
        \item \tbf{Case 1:} If $\hat{h}^{(t)}(x^{(t)})=1,f^*(x^{(t)})=0$, then the agent doesn't move, and we discard classifiers $h\in H$ that predict $h(x^{(t)})=1$. Clearly, $f^*$ is not discarded, and we discard at least $\frac{\Delta_G}{\Delta_G+1}|{H}|$ classifiers that make a mistake on $x^{(t)}$.

        \item \tbf{Case 2:} If $\hat{h}^{(t)}(x^{(t)})=0,f^*(x^{(t)})=0$, then there must be some $x'\in\Delta(x^{(t)})$ such that $\hat{h}^{(t)}(x')=1,f^*(x')=0$. This implies that for each such $x'$, we must have $|\{h\in H\mid h(x')=0\}|<\frac{|{H}|}{\Delta_G+1}$. Taking a union over all neighbors in $\Delta^+$, we conclude that $|H'|\ge \frac{|{H}|}{\Delta_G+1}$.

        \item \tbf{Case 3:} If $\hat{h}^{(t)}(x^{(t)})=0,f^*(x^{(t)})=1$, then the agent can potentially move. If it does not move, then it must be the case that $|\Delta^+|=0$, else $x^{(t)}$ would have moved and changed the predicted label to 1. In this case, we discard  at least $\frac{|{H}|}{\Delta_G+1}$ classifiers $h\in H$ that predict $h(x^{(t)})=0$. It is however also possible that the agent moves and still makes a mistake. In this case, $|\Delta^+|>0$ but there is some point $x'\in\Delta(x^{(t)})$ in the neighborhood of $x^{(t)}$ such that $\hat{h}^{(t)}(x^{(t)})=1$ but $f^*(x^{(t)})=0$. In this case our algorithm discards $H'$, the set of classifiers that predict positive on all neighbors of $x^{(t)}$, and as argued above, $|H'|\ge \frac{|{H}|}{\Delta_G+1}$.
    % and for which there are no neighbors $x'\in\Delta(x^{(t)})$ such that $h(x')=1$.
    \end{itemize}

    \noindent Thus we discard at least a $\frac{|{H}|}{\Delta_G+1}$ fraction of the classifiers on each mistake, implying the desired mistake bound. Indeed, since $f^*$ never gets discarded and we are in the realizable setting, if we make $M$ mistakes then $\left(1-\frac{1}{\Delta_G+1}\right)^M|\mathcal{H}|\ge 1$, or $M\le -\frac{\log|\mathcal{H}|}{\log\left(1-\frac{1}{\Delta_G+1}\right)}\le (\Delta_G+1)\log|\mathcal{H}|$.
\end{proof}

\subsection{Agnostic online learning}

In the agnostic setting, we remove the realizability assumption that there exists a perfect classifier $f^*\in\mathcal{H}$. Instead, we will try to compete with smallest number of mistakes achieved by any classifier in $\mathcal{H}$, denoting this by $\tsf{OPT}$. Our online learning algorithm will be a risk-averse version of the {\it weighted majority vote} algorithm.

We maintain a set of weights $\{w_h\mid h\in \mathcal{H}\}$ corresponding to each concept in the concept space. Initially, $w_h=1$ for each concept $h\in \mathcal{H}$. Let $W^+_t$  denote the sum of weights of experts that predict a node $x$ as positive in round $t$, and $W=\sum_h w_h$ denote the total weight. Then the risk-averse online learner predicts $x$ as positive if $W^+_t\ge \frac{\Delta_G}{\Delta_G+1}W$, and negative otherwise. Finally, if there is a mistake, then we halve the weights of certain classifiers (as opposed to discarding them in Algorithm \ref{alg:ra-halving}).

\begin{algorithm}[t]
\caption{Risk-averse weighted majority vote}
\label{alg:ra-agnostic}
\textbf{Input}: Concept class $\mathcal{H}$, maximum degree of the graph $\Delta_G$.\\
\begin{algorithmic}[1] %[1] enables line numbers
\STATE Initialize $w_h=1$ for each $h\in\mathcal{H}$.
\FOR{$t=1,2,\dots$}
\STATE $W_t\gets \sum_{h\in \mathcal{H}}w_h$
\STATE For each node $x$, set 
\STATE $W_t^+\gets \sum_{h\in \mathcal{H}|h(x)=1}w_h$
    \begin{align*}
        \hat{h}^{(t)}(x)=\begin{cases*}
            1 & if $W_t^+\ge\frac{\Delta_G}{\Delta_G+1}W_t$\\
            0 & otherwise.
        \end{cases*}
    \end{align*}
\STATE $\Delta^+\gets\{x\in \Delta(x^{(t)})\mid \hat{h}^{(t)}(x)=1\}$.
\STATE $H'\gets\{h\in \mathcal{H}\mid h(x')=1 \text{ for all }x'\in\Delta^+\}$.
\IF{there is a mistake, and $\hat{h}^{(t)}(x^{(t)})=0$}
\STATE If $|\Delta^+|=0$, $H_t^-\gets\{h\in H\mid h(x^{(t)})=0\}, w_h\gets w_h/2$ for each $h\in H_t^-$.
\STATE Else, $w_h\gets w_h/2$ for each $h\in H'$. \label{line:discard-h'}
\ENDIF
\STATE If there is a mistake and $\hat{h}^{(t)}(x^{(t)})=1$, $H_t^+\gets\{h\in H\mid h(x^{(t)})=1\}, w_h\gets w_h/2$ for each $h\in H_t^+$.
\ENDFOR
\end{algorithmic}
\end{algorithm}

\begin{theorem}\label{thm:ra-agnostic}
    Let $G$ be any graph with maximum degree $\Delta_G\ge 1$. Then Algorithm \ref{alg:ra-agnostic} makes $\Oh{\Delta_G\cd (\tsf{OPT}+\log|\mathcal{H}|)}$ mistakes.
\end{theorem}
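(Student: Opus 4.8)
The plan is to mimic the standard weighted-majority potential argument, tracking $W_t = \sum_h w_h$, but with two modifications dictated by the algorithm: (i) each mistake triggers a halving of the weights of a set of experts that has mass at least a $\frac{1}{\Delta_G+1}$ fraction of the total weight (this is the content of the case analysis in the proof of Theorem \ref{thm:ra-realizable}, reused verbatim), so each mistake shrinks $W_t$ by a factor of at least $\left(1 - \frac{1}{2(\Delta_G+1)}\right)$; and (ii) a lower bound on $W_t$ in terms of the weight of the best fixed expert $h^\star$, which is now more delicate than usual because on a mistake we sometimes halve the weight of $h^\star$ even though $h^\star$ itself classified $x^{(t)}$ correctly.

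First I would establish the potential-drop inequality. Suppose a mistake occurs on round $t$. Exactly as in the three cases of the proof of Theorem \ref{thm:ra-realizable}: if $\hat h^{(t)}(x^{(t)})=1$ we halve $H_t^+$, which has weight $\ge \frac{\Delta_G}{\Delta_G+1}W_t \ge \frac{1}{\Delta_G+1}W_t$; if $\hat h^{(t)}(x^{(t)})=0$ and $|\Delta^+|=0$ we halve $H_t^-$, which has weight $> \frac{1}{\Delta_G+1}W_t$ (the complementary set has weight $< \frac{\Delta_G}{\Delta_G+1}W_t$ since the prediction was $0$); and if $\hat h^{(t)}(x^{(t)})=0$ with $|\Delta^+|>0$ we halve $H'$, and the union-bound argument over the at most $\Delta_G$ bad neighbors $x'\in\Delta^+$ (each contributing experts of weight $< \frac{1}{\Delta_G+1}W_t$ that predict $0$ on $x'$) gives weight of $H'$ at least $\frac{1}{\Delta_G+1}W_t$. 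In every case $W_{t+1} \le W_t - \frac12\cdot\frac{1}{\Delta_G+1}W_t = \left(1-\frac{1}{2(\Delta_G+1)}\right)W_t$. Starting from $W_1 = |\mathcal H|$, after $M$ mistakes $W \le |\mathcal H|\left(1-\frac{1}{2(\Delta_G+1)}\right)^M$.

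Next I would lower-bound $W$ by the weight $w_{h^\star}$ of an optimal expert $h^\star$ making $\tsf{OPT}$ mistakes. Here is the key point and the main obstacle: $h^\star$ gets its weight halved not only on the rounds where $h^\star$ itself errs, but possibly also on rounds where the algorithm errs and $h^\star$ lands in the halved set $H'$ (respectively $H_t^+$, $H_t^-$). I claim this can only happen on rounds where $h^\star$ does make a mistake in the improvements sense. If $h^\star\in H_t^+$ then $h^\star(x^{(t)})=1$ while $f^*(x^{(t)})=0$, so $h^\star$ errs. If $h^\star\in H_t^-$ then $h^\star(x^{(t)})=0$ while $f^*(x^{(t)})=1$ and $|\Delta^+|=0$ means $h^\star$ has no positive neighbor either, so $h^\star$ errs. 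Finally if $h^\star\in H'$ then $h^\star(x')=1$ for every $x'\in\Delta^+$; but the algorithm halved $H'$ only when there exists $x'\in\Delta^+$ with $f^*(x')=0$, so $h^\star(x')=1\ne f^*(x')$ and $h^\star$ errs on $x'$ — and since $x'\in\Delta(x^{(t)})$, this counts as a mistake of $h^\star$ on round $t$ in the improvements loss. Hence $w_{h^\star}$ is halved at most $\tsf{OPT}$ times, so $w_{h^\star}\ge 2^{-\tsf{OPT}}$ and $W \ge 2^{-\tsf{OPT}}$.

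Combining, $2^{-\tsf{OPT}} \le |\mathcal H|\left(1-\frac{1}{2(\Delta_G+1)}\right)^M$, so taking logarithms and using $-\log(1-x)\ge x$ with $x=\frac{1}{2(\Delta_G+1)}$ gives $\frac{M}{2(\Delta_G+1)} \le \tsf{OPT} + \log|\mathcal H|$, i.e.\ $M \le 2(\Delta_G+1)(\tsf{OPT}+\log|\mathcal H|)$, as claimed. The only subtle step is the charging argument in the previous paragraph — making sure that every halving of $h^\star$'s weight is witnessed by a genuine improvements-mistake of $h^\star$ on that round — and I would state it as a standalone claim before running the potential computation.
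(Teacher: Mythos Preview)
Your potential-drop half of the argument matches the paper's proof line for line, and you deserve credit for isolating the lower bound $W\ge 2^{-\tsf{OPT}}$ as the delicate step --- the paper itself simply asserts it in one sentence without any charging argument.

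However, your charging argument has a genuine gap in two of the three subcases. In the $|\Delta^+|=0$ branch you write that ``$|\Delta^+|=0$ means $h^\star$ has no positive neighbor either,'' but $\Delta^+$ is computed from $\hat h^{(t)}$, not from $h^\star$: it is entirely possible that $\hat h^{(t)}$ predicts $0$ on every neighbor of $x^{(t)}$ while $h^\star$ predicts $1$ on some neighbor $x'$ with $f^*(x')=1$. Then under $h^\star$ the agent $x^{(t)}$ improves successfully and $h^\star$ incurs no improvements-mistake on round $t$, yet $h^\star(x^{(t)})=0$ puts $h^\star$ in $H_t^-$ and its weight is halved anyway. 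The $H'$ branch has an analogous flaw: your inference ``$x'\in\Delta(x^{(t)})$, so this counts as a mistake of $h^\star$'' conflates the neighborhood $\Delta(x^{(t)})$ with the reaction set $\Delta_{h^\star}(x^{(t)})$; if $h^\star(x^{(t)})=1=f^*(x^{(t)})$ the agent stays put under $h^\star$, never visits the bad $x'$, and $h^\star$ makes no improvements-mistake even though it can lie in $H'$. A concrete instance of the first failure: take the single edge $\{x_1,x_2\}$, let $f^*\equiv 1$, $h^\star=(0,1)$, and include two copies of the all-zero hypothesis. Then $h^\star$ is perfect in the improvements sense ($\tsf{OPT}=0$), but when the adversary shows $x_1$ repeatedly the weighted vote gives $\hat h^{(t)}\equiv 0$, hence $|\Delta^+|=0$, and the algorithm halves all three weights every round while making a mistake every round. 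Thus the inequality $w_{h^\star}\ge 2^{-\tsf{OPT}}$ fails outright here, so your charging claim cannot hold as stated; the paper's one-line assertion of the same inequality inherits exactly this difficulty, and fixing it seems to require either a modified update rule or a different benchmark against which the halvings can actually be charged.
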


\begin{proof}
    The overall arguments are similar to those in the proof of Theorem \ref{thm:ra-realizable}. Suppose there is a mistake on round $t$. We have the following three cases.

    $\hat{h}^{(t)}(x^{(t)})=1,f^*(x^{(t)})=0$. In this case, we halve the weights $w_h$ for classifiers $h\in H$ that predict $h(x^{(t)})=1$. The  reduction in the total weight $W_t$ in this case is at least $\frac{W_t^+}{2}\ge\frac{\Delta_G}{2(\Delta_G+1)}W_t$. That is, $W_{t+1}\le W_t-\frac{\Delta_G}{2(\Delta_G+1)}W_t\le W_t\left(1-\frac{1}{2(\Delta_G+1)}\right)$.  

    $\hat{h}^{(t)}(x^{(t)})=0,f^*(x^{(t)})=0$. Since the learner made a mistake, there must be some $x'\in\Delta(x^{(t)})$ such that $\hat{h}^{(t)}(x')=1,f^*(x')=0$. For each such $x'$, the learner predicted positive and so $\sum_{h\in \mathcal{H}|h(x')=1}w_h\ge \frac{\Delta_G}{\Delta_G+1}W_t$, or $\sum_{h\in \mathcal{H}|h(x')=0}w_h\le \frac{W_t}{\Delta_G+1}$. Taking a union over all neighbors in $\Delta^+$, we get  $\sum_{h\in H'}w_h\ge {W_t} -|\Delta^+|\frac{W_t}{\Delta_G+1}\ge {W_t} -\Delta_G\frac{W_t}{\Delta_G+1}=\frac{W_t}{\Delta_G+1}$. By Line \ref{line:discard-h'} of Algorithm \ref{alg:ra-agnostic}, this implies that $W_{t+1}\le W_t\left(1-\frac{1}{2(\Delta_G+1)}\right)$.

    $\hat{h}^{(t)}(x^{(t)})=0,f^*(x^{(t)})=1$. If the agent does not move, then $|\Delta^+|=0$. In this case, we halve the weight of all classifiers that predicted negative on $x^{(t)}$. Since $\hat{h}^{(t)}(x^{(t)})=0$ implies that $W_t^+<\frac{\Delta_G}{\Delta_G+1}W_t$, $W_{t+1}< W_t\left(1-\frac{1}{2(\Delta_G+1)}\right)$. On the other hand, if the agent moves, then $|\Delta^+|>0$. We discard $H'$ and as shown in the previous case, $W_{t+1}\le W_t\left(1-\frac{1}{2(\Delta_G+1)}\right)$. 

    Thus, in all cases when there is a mistake, $W_{t+1}\le W_t\left(1-\frac{1}{2(\Delta_G+1)}\right)$. Thus, after $M$ mistakes, $W_t\le |\mathcal{H}|\left(1-\frac{1}{2(\Delta_G+1)}\right)^M$. Since $f^*$ makes at most $\tsf{OPT}$ mistakes, we have $W_t\ge \frac{1}{2^\tsf{OPT}}$. Putting together, $\frac{1}{2^\tsf{OPT}}\le |\mathcal{H}|\left(1-\frac{1}{2(\Delta_G+1)}\right)^M$, which simplifies to the desired mistake bound.
\end{proof}

\subsection{Mistake lower bounds}

Finally, we show lower bounds on the number of mistakes made by any \emph{deterministic} learner against an adaptive adversary in both the realizable and agnostic settings.

\begin{theorem} \label{thm:graph-lower-bound}
    For any $\Delta > 1$, there exist a graph $G$ with any maximum degree $\Delta$, a hypothesis class $\cH$, and an adaptive adversary such that any deterministic learning algorithm makes at least $\Delta \cd \tsf{OPT}$ mistakes in the agnostic setting and $\Delta - 1$ mistakes in the realizable setting.
\end{theorem}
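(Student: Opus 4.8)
The plan is to establish the two lower bounds separately via adversary arguments, both built on the star graph $G$ with center $c$ and leaves $\ell_1,\dots,\ell_\Delta$, which is the natural worst case since a single center node's prediction must simultaneously be ``safe'' against all $\Delta$ leaf movements. For the \textbf{realizable} bound of $\Delta-1$, I would take a hypothesis class similar to \cref{ex:majority-fails}: concepts $h_1,\dots,h_\Delta$ where $h_i$ labels leaf $\ell_i$ negative, all other leaves positive, and the center negative. The adversary repeatedly presents the center node $x^{(t)}=c$. A deterministic learner must commit to a labeling $\hat h^{(t)}$ of all nodes; since $f^*(c)=0$ for every concept, the only way to avoid a mistake is to ensure no leaf $\ell_i$ with $\hat h^{(t)}(\ell_i)=1$ has $f^*(\ell_i)=0$. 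If the learner labels every leaf negative it is inconsistent with every surviving concept on at least one leaf — but actually the cleaner move is: whenever the learner labels some leaf $\ell_i$ positive, the adversary keeps $f^*=h_i$ alive (forcing a mistake at $c$ because the agent ``improves'' to $\ell_i$ where $f^*=0$); whenever the learner labels $\ell_i$ negative, it is wrong relative to every $h_j$ with $j\ne i$ on that leaf if the center agent were to move there — so to be safe on $c$ the learner is forced to eliminate possibilities one at a time, and after each mistake only one concept is ruled out. Formalizing this so that exactly one concept dies per mistake gives $\Delta-1$ mistakes before the target is pinned down.

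For the \textbf{agnostic} bound of $\Delta\cdot\tsf{OPT}$, the plan is to amplify the realizable construction. Use a two-leaf-class-style gadget where the adversary is not committed to any fixed $f^*$ but instead, over a block of $\Delta$ rounds all presenting the center $c$, answers so that the learner errs every round while some fixed concept in $\cH$ errs only once in the block. Concretely, on each round of the block the learner outputs $\hat h^{(t)}$; the adversary sets the true leaf labels adversarially so that there is always a conflict between $\hat h^{(t)}$'s leaf predictions and the revealed $f^*(c)$-consistent values, charging the learner a mistake, while rotating which single leaf is ``bad'' so that the concept $h_i$ that agrees everywhere except leaf $\ell_i$ is charged at most once across the $\Delta$ rounds. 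Repeating this block $k$ times yields $\tsf{OPT}\le k$ for the best concept but $\Delta k$ mistakes for the learner, giving the ratio $\Delta\cdot\tsf{OPT}$. Determinism is essential here: the adversary reads $\hat h^{(t)}$ before fixing its response, which is exactly what lets it always find a conflicting leaf.

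The main obstacle I anticipate is the bookkeeping in the agnostic construction: one must choose $\cH$ and the adversary's rotation schedule so that (i) the learner provably cannot avoid a mistake on \emph{every} round — this requires that whatever leaf-labeling $\hat h^{(t)}$ picks, both ``center moves to a false positive'' and ``center is a false negative'' remain available to the adversary, which constrains $\cH$ to contain enough concepts — and (ii) \emph{simultaneously} no single concept is blamed more than once per block, which pulls in the opposite direction (fewer, more spread-out errors per concept). Balancing these is the crux; I expect it works by taking $\cH$ to be the $\Delta$ ``all-but-one-leaf-positive'' concepts plus possibly the all-negative concept, and having the adversary, on round $j$ of a block, declare leaf $\ell_j$ to be the locally-negative one and $f^*(c)=0$, so the learner's center prediction is doomed ($\hat h^{(t)}(c)=1$ is a direct false positive; $\hat h^{(t)}(c)=0$ forces the agent to improve to whichever leaf the learner marked positive, and by pigeonhole the learner marked some $\ell_{j'}$ positive that the adversary can retroactively make negative) while $h_j$ is correct everywhere that round except possibly at $\ell_j$, hence errs once per block. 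Once the block structure is pinned down, summing over $k$ blocks and noting the realizable bound follows from the $k=1$, no-error-allowed degenerate case completes the argument.
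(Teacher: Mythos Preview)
Your plan has two genuine gaps, both stemming from the choice of hypothesis class and the decision to present only the center.

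First, the class you borrow from \cref{ex:majority-fails} (each $h_i$ labels \emph{all leaves positive except $\ell_i$}, center negative) is exactly the wrong one for the agnostic ratio. Under $h_i$ the reaction set of $c$ is $\{\ell_j:j\neq i\}$, so $h_i$ errs on the center whenever \emph{any} leaf other than $\ell_i$ is truly negative. In your rotation scheme (round $j$ of a block makes $\ell_j$ the negative leaf), $h_i$ therefore errs on every round except round $i$: each concept accrues $\Delta-1$ mistakes per block, not ``at most once,'' and the learner-to-$\tsf{OPT}$ ratio collapses to $\Delta/(\Delta-1)$. The paper uses the opposite class---the \emph{singletons} $h_i(\ell_i)=1$, $h_i\equiv 0$ elsewhere---so that $h_i$'s reaction set at $c$ is the single point $\{\ell_i\}$; this is what allows every $h_i$ to be correct on almost every round while the learner still errs.

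Second, your adversary only presents $c$ and always declares $f^*(c)=0$. Then the all-zeros predictor never makes a mistake at $c$: the center cannot improve, and $\hat h(c)=0=f^*(c)$. Your claim that ``by pigeonhole the learner marked some $\ell_{j'}$ positive'' is simply false---nothing forces the learner to mark any leaf positive. The paper's adversary is not restricted to the center: when the learner goes all-zeros it presents $c$ with ground truth $f^*\equiv 1$ (outside $\cH$, which is fine agnostically), and when the learner labels some leaf $\ell_j$ positive with $\hat h(c)=0$ it presents that leaf $\ell_j$ with label $0$. This three-way case split (rather than your block rotation) is what guarantees a learner mistake every round while charging at most one $h_i$ per round; the realizable $\Delta-1$ bound then falls out by running the same adversary for $\Delta-1$ rounds and observing some $h_i$ has remained consistent.
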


\begin{proof}
    The proof is analogous to \cite[Theorem 4.7]{ahmadi2023fundamental} which shows lower bounds under strategic online classification. For $\Delta>1$ we use a star graph $G$ with leaf nodes $x_1,\dots,x_{\Delta}$ for $\Delta> 2$ and the center node $x_{\Delta+1}$. Let $\mathcal{H}=\{h_1,\dots,h_{\Delta}\}$, where $h_i(x_i) = 1$ and $h_i(x_j) = 1$ for all other $j\neq i$.
    
    We first consider the agnostic setting. The proof idea is to construct an adaptive adversary that chooses an example that induces a mistake at \emph{every} round, but such that at every round all but at most one hypothesis from $\cH$ can classify this example correctly. Formally, after the learning algorithm outputs the predictor $h^{(t)}$ at round $t$ the adversary chooses the next example according to the following procedure:

    \begin{itemize}
        \item \tbf{Case 1:} If $h^{(t)}(x_{\Delta+1}) = 1$, then the adversary picks the labeled example $(x_{\Delta+1}, 0)$ and ground-truth labeling $$f^*(x_j) = \case{1 &\teif j\in [\Delta] \\ 0 & \teif j = \Delta + 1.}$$ Since $x_{\Delta+1}$ does not move under $h^{(t)}$ then $h^{(t)}$ fails to classify $x_{\Delta+1}$ correctly. On the other hand, every classifier $h_i\in \cH$ classifies $x_{\Delta+1}$ correctly since under $h_i$, $x_{\Delta+1}$ will move to the point $x_i$ which has positive label under both $h_i$ and $f^*$.

        \item \tbf{Case 2:} If $h^{(t)}(x_j) = 0 $ for all $j\in [\Delta+1]$, then the adversary picks the labeled example $(x_{\Delta+1},1)$ and the ground-truth labeling $f^*(x_j) = 1$ for all $j\in [\Delta+1]$. Since all points have negative label under $h^{(t)}$, $x_{\Delta+1}$ cannot improve under $h^{(t)}$, so $h^{(t)}$ fails to classify $x_{\Delta+1}$ correctly. On the other hand, every classifier $h_i\in \cH$ classifies $x_{\Delta+1}$ correctly since under $h_i$, $x_{\Delta+1}$ will move to the point $x_i$ which has positive label under both $h_i$ and $f^*$.

        \item \tbf{Case 3:} If $h^{(t)}(x_{\Delta+1}) = 0$ and $h^{(t)}(x_j) = 1$ for some $i\in [\Delta]$, then the adversary picks the labeled example $(x_j, 0)$. Since $x_i$ does not move under $h^{(t)}$ then $h^{(t)}$ fails to classify $x_j$ correctly. On the other hand, every classifier $h_i\in \cH$ for $i\neq j$ classifies $x_j$ correctly as negative since $x_j$ cannot improve.
    \end{itemize}

    \noindent By the above analysis, $h^{(t)}$ makes a mistake on the next example for all $t$. However, for each $t$ at most one hypothesis from $\cH$ makes a mistake, implying that the sum of the number of the mistakes made by all hypotheses over all rounds is at most the current round number $t$. Since $\ab{\cH} = \Delta$ by the pigeonhole principle there exists a hypothesis that makes at most $\fr{t}{\Delta}$ mistakes, so $\tsf{OPT} \le \fr{t}{\Delta}$, implying that the number of mistakes made is $t \ge \Delta \cd \tsf{OPT}$ in the agnostic setting.
    
    For the proof of the $\Delta-1$ lower bound in the realizable setting, we use the same construction as in the agnostic setting but restrict to $t = \Delta - 1$ rounds. The learning algorithm makes $\Delta - 1$ mistakes, but at this point there still exists at least one hypothesis that has made no mistakes so far, say $h_i$ for $i\in [\Delta]$. Then the adversary can keep using $h_i$ as the ground truth for the first $\Delta - 1$ rounds in the above procedure so that the ground truth is still realizable after $\Delta - 1$ rounds.
\end{proof}

\noindent The lower bound in \cref{thm:graph-lower-bound} is against deterministic algorithms. If we allow for the use of randomness, it may be possible to remove the factor of $\Delta_G$ in the statement of \cref{thm:ra-agnostic} by using a modified version of the \tsf{Hedge} algorithm similar to \cite[Algorithm 3]{ahmadi2023fundamental}. We leave this as an open question.
\section{Conclusion}

In this paper we study statistical learning under strategic improvements. We develop new algorithms for proper learning with any improvement function, improper learning with Euclidean ball improvement sets, learning with noise, and online learning. A general theme of our results is that conservative classifiers perform consistently well when agents can improve. In particular, this means that when setting cutoffs, say for college entrance exams, it is often not harmful to increase the cutoff by a bit because qualified students will always be able to work a bit harder to meet it. On the other hand, it can be very harmful to have the cutoff even slightly below the true cutoff (something that is not usually a problem in standard PAC learning).

Our work opens up several exciting new future directions. For example, it would be interesting to extend the learning with improvements model beyond the binary classification setting to multi-label and regression settings. This would have practical applications to any learning problem that involves several classes such as assigning credit scores or deciding loan amounts. Also, we can study learnability under label noise in strategic classification, as well as under feature noise in our improvements model.
\section*{Acknowledgments}

This work was supported in part by the National Science Foundation under grants ECCS-2216899, ECCS-2216970, and by a NSF Graduate Research Fellowship.

\bibliographystyle{plainnat}
\bibliography{references}

\begin{thebibliography}{55}
\providecommand{\natexlab}[1]{#1}
\providecommand{\url}[1]{\texttt{#1}}
\expandafter\ifx\csname urlstyle\endcsname\relax
  \providecommand{\doi}[1]{doi: #1}\else
  \providecommand{\doi}{doi: \begingroup \urlstyle{rm}\Url}\fi

\bibitem[Ahmadi et~al.(2021)Ahmadi, Beyhaghi, Blum, and Naggita]{strategicperceptron}
Saba Ahmadi, Hedyeh Beyhaghi, Avrim Blum, and Keziah Naggita.
\newblock The strategic perceptron.
\newblock In \emph{Proceedings of the ACM Conference on Economics and Computation (EC)}, 2021, 2021. pages 6----25.

\bibitem[Ahmadi et~al.(2022)Ahmadi, Beyhaghi, Blum, and Naggita]{ahmadi2022classificationstrategicagentsgame}
Saba Ahmadi, Hedyeh Beyhaghi, Avrim Blum, and Keziah Naggita.
\newblock On classification of strategic agents who can both game and improve.
\newblock In \emph{Proceedings of the Symposium on Foundations of Responsible Computing (FORC)}, 2022.

\bibitem[Ahmadi et~al.(2023)Ahmadi, Blum, and Yang]{ahmadi2023fundamental}
Saba Ahmadi, Avrim Blum, and Kunhe Yang.
\newblock Fundamental bounds on online strategic classification.
\newblock In \emph{Proceedings of the 24th ACM Conference on Economics and Computation}, pages 22--58, 2023.

\bibitem[Ahmadi et~al.(2024)Ahmadi, Yang, and Zhang]{ahmadi2024strategic}
Saba Ahmadi, Kunhe Yang, and Hanrui Zhang.
\newblock Strategic littlestone dimension: Improved bounds on online strategic classification.
\newblock In A.~Globerson, L.~Mackey, D.~Belgrave, A.~Fan, U.~Paquet, J.~Tomczak, and C.~Zhang, editors, \emph{Advances in Neural Information Processing Systems}, volume~37, pages 101696--101724. Curran Associates, Inc., 2024.

\bibitem[Attias et~al.(2025)Attias, Blum, Naggita, Saless, Sharma, and Walter]{attias2025pac}
Idan Attias, Avrim Blum, Keziah Naggita, Donya Saless, Dravyansh Sharma, and Matthew Walter.
\newblock {PAC} learning with improvements.
\newblock \emph{International Conference on Machine Learning (ICML)}, 2025.

\bibitem[Awasthi et~al.(2014)Awasthi, Balcan, and Long]{Awasthi2013ThePO}
Pranjal Awasthi, Maria-Florina Balcan, and Philip~M. Long.
\newblock The power of localization for efficiently learning linear separators with noise.
\newblock \emph{Proceedings of the forty-sixth annual ACM Symposium on Theory of Computing (STOC)}, 2014.

\bibitem[Awasthi et~al.(2015)Awasthi, Balcan, Haghtalab, and Urner]{awasthi2015efficient}
Pranjal Awasthi, Maria-Florina Balcan, Nika Haghtalab, and Ruth Urner.
\newblock Efficient learning of linear separators under bounded noise.
\newblock In \emph{Conference on Learning Theory}, pages 167--190. PMLR, 2015.

\bibitem[Awasthi et~al.(2016)Awasthi, Balcan, Haghtalab, and Zhang]{awasthi2016learning}
Pranjal Awasthi, Maria-Florina Balcan, Nika Haghtalab, and Hongyang Zhang.
\newblock Learning and 1-bit compressed sensing under asymmetric noise.
\newblock In \emph{Conference on Learning Theory}, pages 152--192. PMLR, 2016.

\bibitem[Awasthi et~al.(2017)Awasthi, Balcan, and Long]{awasthi2017power}
Pranjal Awasthi, Maria~Florina Balcan, and Philip~M Long.
\newblock The power of localization for efficiently learning linear separators with noise.
\newblock \emph{Journal of the ACM (JACM)}, 63\penalty0 (6):\penalty0 1--27, 2017.

\bibitem[Balcan and Haghtalab(2020)]{Balcan2020NoiseIC}
Maria-Florina Balcan and Nika Haghtalab.
\newblock Noise in classification.
\newblock In \emph{Beyond the Worst-Case Analysis of Algorithms}, 2020.

\bibitem[Balcan et~al.(2021)Balcan, Khodak, Sharma, and Talwalkar]{balcan2021learning}
Maria-Florina Balcan, Mikhail Khodak, Dravyansh Sharma, and Ameet Talwalkar.
\newblock Learning-to-learn non-convex piecewise-lipschitz functions.
\newblock \emph{Advances in Neural Information Processing Systems}, 34:\penalty0 15056--15069, 2021.

\bibitem[Balcan et~al.(2022{\natexlab{a}})Balcan, Blum, Hanneke, and Sharma]{Balcan2022RobustlyreliableLU}
Maria-Florina Balcan, Avrim Blum, Steve Hanneke, and Dravyansh Sharma.
\newblock Robustly-reliable learners under poisoning attacks.
\newblock In \emph{Annual Conference Computational Learning Theory}, 2022{\natexlab{a}}.

\bibitem[Balcan et~al.(2022{\natexlab{b}})Balcan, Blum, Hanneke, and Sharma]{balcan2022robustly}
Maria-Florina Balcan, Avrim Blum, Steve Hanneke, and Dravyansh Sharma.
\newblock Robustly-reliable learners under poisoning attacks.
\newblock In \emph{Conference on Learning Theory}, pages 4498--4534. PMLR, 2022{\natexlab{b}}.

\bibitem[Balcan et~al.(2023{\natexlab{a}})Balcan, Blum, Sharma, and Zhang]{balcan2023analysis}
Maria-Florina Balcan, Avrim Blum, Dravyansh Sharma, and Hongyang Zhang.
\newblock An analysis of robustness of non-{L}ipschitz networks.
\newblock \emph{Journal of Machine Learning Research}, 24\penalty0 (1), January 2023{\natexlab{a}}.
\newblock ISSN 1532-4435.

\bibitem[Balcan et~al.(2023{\natexlab{b}})Balcan, Hanneke, Pukdee, and Sharma]{balcan2023reliable}
Maria-Florina Balcan, Steve Hanneke, Rattana Pukdee, and Dravyansh Sharma.
\newblock Reliable learning in challenging environments.
\newblock \emph{Advances in Neural Information Processing Systems}, 36:\penalty0 48035--48050, 2023{\natexlab{b}}.

\bibitem[Bloomfield(2016)]{bloomfield2016counts}
Robert~J. Bloomfield.
\newblock What counts and what gets counted.
\newblock \emph{Available at SSRN}, 2427106, 2016.

\bibitem[Blum and Saless(2024)]{blum2024regularized}
Avrim Blum and Donya Saless.
\newblock Regularized robustly reliable learners and instance targeted attacks.
\newblock \emph{arXiv preprint arXiv:2410.10572}, 2024.

\bibitem[Blum et~al.(1996)Blum, Frieze, Kannan, and Vempala]{Blum1996APA}
Avrim Blum, Alan~M. Frieze, Ravi Kannan, and Santosh~S. Vempala.
\newblock A polynomial-time algorithm for learning noisy linear threshold functions.
\newblock \emph{Algorithmica}, 22:\penalty0 35--52, 1996.

\bibitem[Braverman and Garg(2020)]{braverman_et_al}
Mark Braverman and Sumegha Garg.
\newblock The role of randomness and noise in strategic classification.
\newblock In \emph{Proceedings of the Symposium on Foundations of Responsible Computing (FORC 2020)}, 2020.

\bibitem[Br\"{u}ckner and Scheffer(2011)]{stacklerbergames}
Michael Br\"{u}ckner and Tobias Scheffer.
\newblock Stackelberg games for adversarial prediction problems.
\newblock In \emph{Proceedings of the ACM SIGKDD International Conference on Knowledge Discovery and Data Mining (KDD)}, pages 547--555. Association for Computing Machinery, 2011.

\bibitem[Bshouty et~al.(1999)Bshouty, Eiron, and Kushilevitz]{Bshouty1999PACLW}
Nader~H. Bshouty, Nadav Eiron, and Eyal Kushilevitz.
\newblock {PAC} learning with nasty noise.
\newblock \emph{Theoretical Computer Science}, 288:\penalty0 255--275, 1999.

\bibitem[Bshouty et~al.(2009)Bshouty, Li, and Long]{bshouty2009using}
Nader~H. Bshouty, Yi~Li, and Philip~M. Long.
\newblock Using the doubling dimension to analyze the generalization of learning algorithms.
\newblock \emph{Journal of Computer and System Sciences}, 75\penalty0 (6):\penalty0 323--335, 2009.
\newblock ISSN 0022-0000.

\bibitem[Bylander(1994)]{bylander1994LearningLT}
Tom Bylander.
\newblock Learning linear threshold functions in the presence of classification noise.
\newblock In \emph{Annual Conference Computational Learning Theory}, 1994.

\bibitem[Chandrasekaran et~al.(2024)Chandrasekaran, Kontonis, Stavropoulos, and Tian]{chandrasekaran2024learning}
Gautam Chandrasekaran, Vasilis Kontonis, Konstantinos Stavropoulos, and Kevin Tian.
\newblock Learning noisy halfspaces with a margin: Massart is no harder than random.
\newblock \emph{Advances in Neural Information Processing Systems}, 37:\penalty0 45386--45408, 2024.

\bibitem[Chen et~al.(2020)Chen, Koehler, Moitra, and Yau]{Chen2020ClassificationUM}
Sitan Chen, Frederic Koehler, Ankur Moitra, and Morris Yau.
\newblock Classification under misspecification: Halfspaces, generalized linear models, and evolvability.
\newblock In \emph{Neural Information Processing Systems}, 2020.

\bibitem[Cohen(1997)]{Cohen1997LearningNP}
Edith Cohen.
\newblock Learning noisy perceptrons by a perceptron in polynomial time.
\newblock \emph{Proceedings 38th Annual Symposium on Foundations of Computer Science}, pages 514--523, 1997.

\bibitem[Cohen et~al.(2024)Cohen, Mansour, Moran, and Shao]{cohen2024learnability}
Lee Cohen, Yishay Mansour, Shay Moran, and Han Shao.
\newblock Learnability gaps of strategic classification.
\newblock In \emph{The Thirty Seventh Annual Conference on Learning Theory}, pages 1223--1259. PMLR, 2024.

\bibitem[Diakonikolas and Zarifis(2024{\natexlab{a}})]{Diakonikolas2024ANA}
Ilias Diakonikolas and Nikos Zarifis.
\newblock A near-optimal algorithm for learning margin halfspaces with massart noise.
\newblock In \emph{Neural Information Processing Systems}, 2024{\natexlab{a}}.

\bibitem[Diakonikolas and Zarifis(2024{\natexlab{b}})]{diakonikolas2024near}
Ilias Diakonikolas and Nikos Zarifis.
\newblock A near-optimal algorithm for learning margin halfspaces with massart noise.
\newblock \emph{Advances in Neural Information Processing Systems}, 37:\penalty0 88605--88626, 2024{\natexlab{b}}.

\bibitem[Diakonikolas et~al.(2017)Diakonikolas, Kane, and Stewart]{Diakonikolas2017LearningGC}
Ilias Diakonikolas, Daniel~M. Kane, and Alistair Stewart.
\newblock Learning geometric concepts with nasty noise.
\newblock \emph{Proceedings of the 50th Annual ACM SIGACT Symposium on Theory of Computing}, 2017.

\bibitem[Diakonikolas et~al.(2019)Diakonikolas, Gouleakis, and Tzamos]{Diakonikolas2019DistributionIndependentPL}
Ilias Diakonikolas, Themis Gouleakis, and Christos Tzamos.
\newblock Distribution-independent {PAC} learning of halfspaces with {M}assart noise.
\newblock \emph{Neural Information Processing Systems}, abs/1906.10075, 2019.

\bibitem[Dick et~al.(2017)Dick, Li, Pillutla, White, Balcan, and Smola]{dick2017data}
Travis Dick, Mu~Li, Venkata~Krishna Pillutla, Colin White, Nina Balcan, and Alex Smola.
\newblock {Data Driven Resource Allocation for Distributed Learning}.
\newblock In Aarti Singh and Jerry Zhu, editors, \emph{Proceedings of the 20th International Conference on Artificial Intelligence and Statistics}, volume~54 of \emph{Proceedings of Machine Learning Research}, pages 662--671. PMLR, 20--22 Apr 2017.

\bibitem[El-Yaniv and Wiener(2010)]{yaniv10a}
Ran El-Yaniv and Yair Wiener.
\newblock On the foundations of noise-free selective classification.
\newblock \emph{Journal of Machine Learning Research}, 11\penalty0 (53):\penalty0 1605--1641, 2010.

\bibitem[Goldblum et~al.(2020)Goldblum, Fowl, and Goldstein]{goldblum2020adversarially}
Micah Goldblum, Liam Fowl, and Tom Goldstein.
\newblock Adversarially robust few-shot learning: A meta-learning approach.
\newblock \emph{Advances in Neural Information Processing Systems}, 33:\penalty0 17886--17895, 2020.

\bibitem[Haghtalab et~al.(2020)Haghtalab, Immorlica, Lucier, and Wang]{incentiveawarenika}
Nika Haghtalab, Nicole Immorlica, Brendan Lucier, and Jack~Z. Wang.
\newblock Maximizing welfare with incentive-aware evaluation mechanisms.
\newblock In Christian Bessiere, editor, \emph{Proceedings of the Twenty-Ninth International Joint Conference on Artificial Intelligence, {IJCAI-20}}, pages 160--166, 7 Main track, 2020. International Joint Conferences on Artificial Intelligence Organization.

\bibitem[Hardt et~al.(2016)Hardt, Megiddo, Papadimitriou, and Wootters]{hardt2016strategic}
Moritz Hardt, Nimrod Megiddo, Christos Papadimitriou, and Mary Wootters.
\newblock Strategic classification.
\newblock In \emph{Proceedings of the ACM Conference on Innovations in Theoretical Computer Science}, pages 111--122, 2016.

\bibitem[Hu et~al.(2019)Hu, Immorlica, and Vaughan]{disparateeffects}
Lily Hu, Nicole Immorlica, and Jennifer~Wortman Vaughan.
\newblock The disparate effects of strategic manipulation.
\newblock In \emph{Proceedings of the Conference on Fairness, Accountability, and Transparency}, pages 259--268, New York, NY, 2019. FAT* '19, New York, NY, USA, Association for Computing Machinery.

\bibitem[Kalai et~al.(2012)Kalai, Kanade, and Mansour]{kalaiReliable}
Adam~Tauman Kalai, Varun Kanade, and Yishay Mansour.
\newblock Reliable agnostic learning.
\newblock \emph{Journal of Computer and System Sciences}, 78\penalty0 (5):\penalty0 1481--1495, September 2012.

\bibitem[Kearns and Li(1988)]{Kearns1988LearningIT}
Michael Kearns and Ming Li.
\newblock Learning in the presence of malicious errors (extended abstract).
\newblock In \emph{SIAM Journal on Computing}, 1988.

\bibitem[Kleinberg and Raghavan(2020)]{jkleinhowdo}
Jon Kleinberg and Manish Raghavan.
\newblock How do classifiers induce agents to invest effort strategically?
\newblock \emph{ACM Transactions on Economics and Computation}, 8\penalty0 (4), October 2020.

\bibitem[Klivans et~al.(2009)Klivans, Long, and Servedio]{Klivans2009LearningHW}
Adam~R. Klivans, Philip~M. Long, and Rocco~A. Servedio.
\newblock Learning halfspaces with malicious noise.
\newblock \emph{Journal of Machine Learning Research}, 10:\penalty0 2715--2740, 2009.

\bibitem[Lechner and Urner(2022)]{lechner2022learning}
Tosca Lechner and Ruth Urner.
\newblock Learning losses for strategic classification.
\newblock In \emph{Proceedings of the AAAI Conference on Artificial Intelligence, volume 36}, pages 7337--7344, 2022.

\bibitem[Lov{\'a}sz and Vempala(2007)]{Lovsz2003TheGO}
L{\'a}szl{\'o}~Mikl{\'o}s Lov{\'a}sz and Santosh~S. Vempala.
\newblock The geometry of logconcave functions and sampling algorithms.
\newblock In \emph{Random Structures and Algorithms}, 2007.

\bibitem[Massart and N{\'e}d{\'e}lec(2006)]{Massart2007RiskBF}
Pascal Massart and {\'E}lodie N{\'e}d{\'e}lec.
\newblock Risk bounds for statistical learning.
\newblock \emph{The Annals of Statistics}, 2006.

\bibitem[Miller et~al.(2020)Miller, Milli, and Hardt]{causalstratmiller2020}
John Miller, Smitha Milli, and Moritz Hardt.
\newblock Strategic classification is causal modeling in disguise.
\newblock In Hal~Daumé III and Aarti Singh, editors, \emph{Proceedings of the 37th International Conference on Machine Learning, volume 119}, pages 13--18, PMLR, 2020. of {\em Proceedings of Machine Learning Research}.

\bibitem[Milli et~al.(2019)Milli, Miller, Dragan, and Hardt]{smithasocialcost}
Smitha Milli, John Miller, Anca~D. Dragan, and Moritz Hardt.
\newblock The social cost of strategic classification.
\newblock In \emph{FACCT}, pages 230--239. 2019.

\bibitem[Natarajan(1987)]{natarajan1987learning}
B.~K. Natarajan.
\newblock On learning boolean functions.
\newblock In \emph{Proceedings of the nineteenth annual ACM symposium on Theory of computing (STOC)}, pages 296--304, 1987.

\bibitem[Natarajan(2014)]{natarajan_machine_2014}
B.~K. Natarajan.
\newblock \emph{Machine {Learning}: {A} {Theoretical} {Approach}}.
\newblock Elsevier Reference Monographs, s.l., 1. aufl edition, 2014.
\newblock ISBN 978-1-55860-148-2.

\bibitem[Rivest and Sloan(1988)]{rivest1988learning}
Ronald~L. Rivest and Robert Sloan.
\newblock Learning complicated concepts reliably and usefully.
\newblock In \emph{Proceedings of the Seventh AAAI National Conference on Artificial Intelligence}, pages 635--640, 1988.

\bibitem[Shao et~al.(2025)Shao, Xie, and Yang]{shaoshould}
Han Shao, Shuo Xie, and Kunhe Yang.
\newblock Should decision-makers reveal classifiers in online strategic classification?
\newblock In \emph{Forty-second International Conference on Machine Learning}, 2025.

\bibitem[Sharma(2024)]{sharma2024no}
Dravyansh Sharma.
\newblock No internal regret with non-convex loss functions.
\newblock In \emph{Proceedings of the AAAI Conference on Artificial Intelligence}, volume~38, pages 14919--14927, 2024.

\bibitem[Sharma and Suggala(2025)]{sharma2025offline}
Dravyansh Sharma and Arun Suggala.
\newblock Offline-to-online hyperparameter transfer for stochastic bandits.
\newblock In \emph{Proceedings of the AAAI Conference on Artificial Intelligence}, volume~39, pages 20362--20370, 2025.

\bibitem[Shavit et~al.(2020)Shavit, Edelman, and Axelrod]{pmlr-v119-shavit20a}
Yonadav Shavit, Benjamin Edelman, and Brian Axelrod.
\newblock Causal strategic linear regression.
\newblock In Hal~Daumé III and Aarti Singh, editors, \emph{Proceedings of the 37th International Conference on Machine Learning, volume 119}, pages 13--18, PMLR, 2020. of {\em Proceedings of Machine Learning Research}.

\bibitem[Sundaram et~al.(2023)Sundaram, Vullikanti, Xu, and Yao]{strategicPAC}
Ravi Sundaram, Anil Vullikanti, Haifeng Xu, and Fan Yao.
\newblock {PAC}-learning for strategic classification.
\newblock \emph{Journal of Machine Learning Research}, 24\penalty0 (1), January 2023.

\bibitem[Zrnic et~al.(2021)Zrnic, Mazumdar, Sastry, and Jordan]{zrnic2022leadsfollowsstrategicclassification}
Tijana Zrnic, Eric Mazumdar, S.~Shankar Sastry, and Michael~I. Jordan.
\newblock Who leads and who follows in strategic classification?
\newblock \emph{Advances in Neural Information Processing Systems}, 34, 2021.

\end{thebibliography}

\appendix
\section{Additional related work} \label{a:related-work}

\begin{itemize}
    \item \tbf{Strategic classification.} Taking into account how utility-maximizing agents can strategically ``game'' the classifier is an important research area in societal machine learning \citep{disparateeffects,smithasocialcost,braverman_et_al,strategicperceptron,incentiveawarenika}. The first papers in this area modeled strategic gaming behavior as a Stackelberg 
    game \citep{hardt2016strategic,stacklerbergames} where negatively classified agents manipulate their features to obtain more favorable outcomes if the benefits outweigh the costs.
    
    \item \tbf{Learning with improvements.}
    \citet{jkleinhowdo} are the first to consider \emph{learning with improvements}, which is when agents manipulate but also genuinely improve their features. \citet{ahmadi2022classificationstrategicagentsgame} consider a similar improvement setting and propose classification models that balance maximizing true positives with minimizing false positives. Prior work has studied the inherent learnability of concepts in the strategic manipulation setting \citep{strategicPAC,cohen2024learnability,lechner2022learning} but not in the strategic improvement setting. \citet{attias2025pac} propose to study the statistical learnability of concept classes, the sample complexity of learning, and the ability to achieve zero-error classification in the improvement setting. \citet{incentiveawarenika} also study the sample complexity of learning in the presence of improving agents, but they optimize for social welfare by maximizing the fraction of true positives after improvement and primarily focus on linear mechanisms. In contrast, we focus on \emph{classification error} in which false positives matter, leading to fundamentally different properties of good classifiers.
    
    \item \tbf{Reliable learning.} Learning with improvements is also related to reliable machine learning \citep{rivest1988learning,yaniv10a} in which learner may abstain from classification to avoid mistakes. The goal in reliable learning is to tradeoff \emph{coverage}, the fraction of classified points, against classification error. The conservative classification paradigm that serves as a basis for many of our algorithms also has a similar flavor to learning with one-sided error \citep{natarajan1987learning,kalaiReliable} in which the learned classifier is not allowed to have any false positives. There are connections between strategic classification and adversarial learning \citep{strategicPAC}, but it remains an interesting open question if similar connections can be established between learning with improvements and adversarial learning \citep{balcan2022robustly,balcan2023reliable,blum2024regularized}. 
    
    \item \tbf{Learning with noise.} There is a large and growing literature that studies PAC-learning with different types of label noise \citep{Balcan2020NoiseIC}: random classification noise \citep{bylander1994LearningLT,Blum1996APA,Cohen1997LearningNP}, Massart noise \citep{awasthi2015efficient,awasthi2016learning,Diakonikolas2019DistributionIndependentPL,Chen2020ClassificationUM,Diakonikolas2024ANA}, malicious noise \citet{Kearns1988LearningIT,Klivans2009LearningHW,Awasthi2013ThePO}, and nasty noise \citep{Bshouty1999PACLW,Diakonikolas2017LearningGC,Balcan2022RobustlyreliableLU}. Recent work~\citep{diakonikolas2024near,chandrasekaran2024learning} has developed optimal approaches for learning margin halfspaces with bounded label noise. \citet{braverman_et_al} study learning in the presence of strategic agents with feature noise but no label noise. The problem of learning with noise in the presence of strategic agents is understudied and a very relevant direction for future work.
    
    \item \tbf{Online strategic classification.} \citet{ahmadi2023fundamental, ahmadi2024strategic,cohen2024learnability,shaoshould} study the problem of \emph{strategic} online binary classification. Our discrete graph model for online learning is similar to theirs, agents are nodes that can potentially move to their neighbors, except that we consider true movements that can change the agents' labels (as opposed to just their classification). This distinction leads to surprisingly different good online learners. Connections with adversarially robust online learning (e.g.\ \citet{goldblum2020adversarially,balcan2021learning,sharma2024no,sharma2025offline}) are less well understood, especially in multi-task and meta-learning settings.
\end{itemize}

\section{Prior work on proper PAC-learning with improvements} \label{a:proper}

We state the previously known results on proper PAC-learning with improvements which our theorem generalizes. \citet{attias2025pac} prove a \emph{sufficient} condition for learnability based on a property called \emph{intersection-closed}, which we define below.

\begin{definition}[Closure operator of a set]
    For any set $S\subseteq \mathcal{X}$ and any hypothesis class $\mathcal{H}\subseteq 2^\mathcal{X}$, the \emph{closure of $S$ with respect to $\mathcal{H}$}, denoted by $\mathrm{CLOS}_\mathcal{H}(S):2^\mathcal{X}\rightarrow 2^\mathcal{X}$, is defined as the intersection of all hypotheses in $\mathcal{H}$ that contain $S$, that is, $\mathrm{CLOS}_{\mathcal{H}}(S)=\underset {h\in {\mathcal{H}}, S\subseteq h}{\bigcap} h$. In words, the closure of $S$ is the smallest hypotheses in ${\mathcal{H}}$ which contains $S$. If $\bc{h:{\mathcal{H}}: S\subseteq h}=\emptyset$, then $\mathrm{CLOS}_{\mathcal{H}}(S)=\mathcal{X}$.
    
\end{definition}

\begin{definition}[Intersection-closed classes]

    A hypothesis class $\mathcal{H}\subset 2^\mathcal{X}$ is \emph{intersection-closed} if for all finite $S\subseteq \mathcal{X}$, $\mathrm{CLOS}_{\mathcal{H}}(S)\in \mathcal{H}$. In words, the intersection of all hypotheses in $\mathcal{H}$ containing an arbitrary subset of the domain belongs to $\mathcal{H}$. For finite hypothesis classes, an equivalent definition states that for any $h_1,h_2\in \mathcal{H}$, the intersection $h_1\cap h_2$ is in $\mathcal{H}$ as well \citet{natarajan1987learning}.
    \label{def:intersection-closed}
\end{definition}

\noindent There are many natural intersection-closed concept classes, for example, axis-parallel $d$-dimensional hyperrectangles, intersections of halfspaces, $k$-CNF boolean functions, and linear subspaces. 
% \citet{attias2025pac} show that any intersection-closed concept class is PAC-learnable with improvements.

\begin{theorem}{\cite[Theorem 4.7]{attias2025pac}}
    Let $\mathcal{H}$ be an intersection-closed concept class on instance space $\mathcal{X}$. There is a learner that PAC-learns with improvements $\mathcal{H}$ with respect to any improvement function $\Delta$ and any data distribution $\mathcal{D}$ given a sample of size $O\left(\frac{1}{\eps}(d_{\text{VC}}(\mathcal{H})+\log\frac{1}{\delta})\right)$, where $d_{\text{VC}}(\mathcal{H})$ denotes the VC-dimension of $\mathcal{H}$.
    \label{thm:intersection-closed}
\end{theorem}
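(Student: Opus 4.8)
The statement to prove is \cref{thm:intersection-closed}: that intersection-closed concept classes are properly PAC-learnable with improvements for any improvement function $\Delta$ and any data distribution $\cD$ with sample complexity $O\bp{\fr{1}{\eps}(d_\te{VC}(\cH) + \log\fr{1}{\delta})}$. Since every intersection-closed class is nearly minimally consistent (indeed, minimally consistent — the closure $\mathrm{CLOS}_\cH(S_\cX^+)$ restricted to agree with $S$ is the least consistent concept), this follows immediately from our main theorem \cref{thm:improvements-minimally-consistent} together with the sample complexity analysis embedded in its proof. But since the point here is to recover the prior bound directly, I would give the self-contained argument via the closure operator.

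The plan is to use the learning rule that outputs $h_S := \mathrm{CLOS}_\cH(S_\cX^+)$, the smallest concept in $\cH$ containing all the positively-labeled sample points $S_\cX^+$; by intersection-closedness this lies in $\cH$, and it is consistent with $S$ in the realizable case since the target $f^*\in \cH$ contains $S_\cX^+$, so $h_S \subseteq f^*$ pointwise, meaning $h_S$ \emph{never produces a false positive}. First I would record this no-false-positives property, which is the whole reason the improvement loss behaves well: since $\bc{x: h_S(x)=1}\subseteq \bc{x:f^*(x)=1}$, any agent $x$ with $h_S(x)=0$ that ``improves'' to a point $x'$ with $h_S(x')=1$ lands on a point with $f^*(x')=1$, incurring zero loss; and an agent that cannot improve stays put, incurring loss exactly $\one[h_S(x)\ne f^*(x)] = \one[f^*(x)=1, h_S(x)=0]$. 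So $\textsc{Loss}_\cD(h_S,f^*) \le \bP_{x\sim\cD}[f^*(x)=1 \wedge h_S(x)=0]$, the one-sided (false-negative) error of $h_S$.

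Next I would bound that false-negative probability by the standard intersection-closed generalization argument: because $h_S$ depends only on $S_\cX^+$ and grows monotonically as positive points are added, the region $\bc{x: f^*(x)=1, h_S(x)=0}$ is the ``uncovered'' part of the target, and a point $x$ in this region with probability mass $\ge \eps'$ under $\cD$ would, with high probability over $m = O\bp{\fr1{\eps}(d_\te{VC}(\cH)+\log\fr1\delta)}$ samples, have a sample point nearby forcing $h_S$ to cover it — formally one uses the $\eps$-net / double-sampling VC argument exactly as in \cite[Theorem 2.1]{natarajan_machine_2014} or the classical analysis of learning intersection-closed classes. This yields $\bP[f^*(x)=1\wedge h_S(x)=0]\le \eps$ with probability $\ge 1-\delta$, and combined with the loss bound above completes the proof.

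The main obstacle — really the only non-routine point — is verifying that the improvement loss is genuinely controlled by the one-sided error and not the two-sided error; this is where the worst-case tie-breaking in \cref{def:loss-function} could in principle hurt, and the key observation that saves us is precisely that $h_S$ has no false positives, so every point an agent can improve to is a true positive. Everything else is either immediate from intersection-closedness ($h_S\in\cH$, properness) or a direct citation of the VC generalization bound for the closure rule.
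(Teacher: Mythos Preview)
Your proposal is correct. Note, however, that the paper does not give its own proof of this statement: \cref{thm:intersection-closed} is quoted verbatim from \citet{attias2025pac} as background in the appendix, and the paper's contribution is to subsume it under \cref{thm:improvements-minimally-consistent}. So the only ``paper's proof'' to compare against is the route you mention first --- intersection-closed $\Rightarrow$ (nearly) minimally consistent $\Rightarrow$ apply \cref{thm:improvements-minimally-consistent} and the sample-complexity calculation inside its proof --- and you identify that route correctly.

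Your self-contained direct argument via the closure rule is also correct and is essentially what Algorithm~\ref{alg:improve} specializes to for intersection-closed classes: since $\mathrm{CLOS}_\cH(S_\cX^+)\in\cH$ always exists here, the case split in Algorithm~\ref{alg:improve} (negative example present or not) collapses, and one simply outputs the closure every time. Your key observation --- that $h_S\subseteq f^*$ forces \emph{zero} false positives, so every agent that moves lands on a true positive and the improvement loss is dominated by the ordinary false-negative rate --- is exactly the mechanism driving Case~2 in the proof of \cref{thm:improvements-minimally-consistent}, just without the need for Case~1. The final appeal to the realizable VC bound for a consistent learner is standard and correctly cited. There is no gap.
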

 
\noindent \citet{attias2025pac} also prove the following necessary condition for learnability:

\begin{theorem}{\cite[Theorem 4.8]{attias2025pac}}
    Let $\mathcal{H}$ be any concept class on a finite instance space $\mathcal{X}$ 
    such that at least one point $x'\in\mathcal{X}$ is classified negative by all $h\in \mathcal{H}$ (i.e.\ $\{x\mid h(x)=0 \text{ for all } h\in \mathcal{H}\}\ne\emptyset$), and suppose $\mathcal{H}\mid_{\mathcal{X}\setminus\{x'\}}$ is not intersection-closed on $\mathcal{X}\setminus\{x'\}$. Then there exists a data distribution $\mathcal{D}$ and an improvement function $\Delta$ such that no proper learner can PAC-learn with improvements $\mathcal{H}$ with respect to $\Delta$ and $\mathcal{D}$.
    \label{thm:hardness-intersection-closed}
\end{theorem}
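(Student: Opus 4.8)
\emph{Approach and setup.} The plan is to build a single fixed data distribution $\cD$ and improvement function $\Delta$ that defeat every proper learner, using the universal negative point $x'$ as a ``probe'': whatever hypothesis $h^*\in\cH$ the learner outputs, $x'$ will be able to improve into a point that $h^*$ labels positive but that some \emph{indistinguishable} target labels negative, which is a genuine mistake in the improvements model. Since $\cX$ is finite, $\cH\mid_{\cX\setminus\{x'\}}$ is a finite concept class, so by the finite-class characterization of intersection-closedness (\cref{def:intersection-closed}) its failure yields $h_1,h_2\in\cH$ with $(h_1\cap h_2)\mid_{\cX\setminus\{x'\}}\notin\cH\mid_{\cX\setminus\{x'\}}$. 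Let $S_0:=\{x\in\cX\setminus\{x'\}:h_1(x)=h_2(x)=1\}$, regarded as a set of points to be labeled positively, and let $\cF:=\{h\in\cH: h(x)=1\text{ for all }x\in S_0\}$. Then $\cF$ is finite and nonempty (it contains $h_1,h_2$), every $h\in\cF$ labels $x'$ as $0$, and $\bigcap_{h\in\cF}h\mid_{\cX\setminus\{x'\}} = S_0$ while $S_0\notin\cH\mid_{\cX\setminus\{x'\}}$ by the choice of $h_1,h_2$.

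\emph{The hard instance.} Put $\Delta(x')=\cX\setminus\{x'\}$ and $\Delta(x)=\{x\}$ for all $x\ne x'$, so that only $x'$ can move and it can reach any point that is classified positive. Fix the constant $\eps_0:=\frac{1}{2(|S_0|+2)}$, let $\cD$ assign mass $2\eps_0$ to $x'$ and mass $\frac{1-2\eps_0}{|S_0|}$ to each point of $S_0$, and let the true labels be $x'\mapsto 0$ and $S_0\mapsto 1$. This labeling is consistent with every $f^*\in\cF$, and $\cD$ is supported exactly where all members of $\cF$ agree; hence for any $f^*\in\cF$ and any sample size $m$ the distribution of an i.i.d.\ labeled sample from $\cD^m$ is identical, so the (possibly randomized) output $h^*\in\cH$ of any proper learner $\cA$ has a law that does not depend on which $f^*\in\cF$ is the target.

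\emph{Analysis via a uniform prior over $\cF$.} Fix a realization $h^*\in\cH$ of the output. If $h^*$ labels some $s\in S_0$ as $0$, then since $s$ is truly positive and cannot move, $\textsc{Loss}_{\cD}(h^*,f^*)\ge\cD(s)=\frac{1-2\eps_0}{|S_0|}>\eps_0$ for \emph{every} $f^*\in\cF$. Otherwise $S_0\subseteq h^*$, so $S_0=\bigcap_{h\in\cF}h\mid_{\cX\setminus\{x'\}}\subseteq h^*\mid_{\cX\setminus\{x'\}}$, and since $S_0\notin\cH\mid_{\cX\setminus\{x'\}}$ this inclusion is strict: there is $y^*\in\cX\setminus\{x'\}$ with $h^*(y^*)=1$ and $y^*\notin S_0$, whence some $h_{y^*}\in\cF$ has $h_{y^*}(y^*)=0$. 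If the target is $f^*=h_{y^*}$, then $x'$ is classified $0$ by $h^*$ and $y^*\in\Delta(x')$ with $h^*(y^*)=1$, so in the worst case $x'$ improves to $y^*$, giving $\textsc{Loss}(x';h^*,f^*)\ge\one[h^*(y^*)\ne f^*(y^*)]=1$ and hence $\textsc{Loss}_{\cD}(h^*,f^*)\ge\cD(x')=2\eps_0>\eps_0$. In both cases $\Pr_{f^*\sim\mathrm{Unif}(\cF)}[\textsc{Loss}_{\cD}(h^*,f^*)>\eps_0]\ge\frac{1}{|\cF|}$. Averaging over the target-independent law of $h^*$ gives $\mathbb{E}_{f^*\sim\mathrm{Unif}(\cF)}\big[\Pr_{S,\cA}[\textsc{Loss}_{\cD}(h^*,f^*)>\eps_0]\big]\ge\frac{1}{|\cF|}$, so there is a fixed $f^*\in\cF$ with $\Pr[\textsc{Loss}_{\cD}(h^*,f^*)>\eps_0]\ge\frac{1}{|\cF|}=:\delta_0>0$. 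Since this holds for every $m$, no proper learner PAC-learns $\cH$ with improvements with respect to $\cD$ and $\Delta$ at accuracy $\eps_0$ and confidence $1-\delta_0$, which is the claim.

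\emph{Main obstacle.} The delicate point is that $\cD$ and $\Delta$ must be fixed before the learner acts, whereas the offending false positive $y^*$ and its refuting target $h_{y^*}$ both depend on $h^*$. Two devices handle this: letting $\Delta(x')$ be \emph{all} of $\cX\setminus\{x'\}$, so that $x'$ can reach whatever false positive the learner creates; and reducing to a single fixed target by an averaging (minimax) argument over the uniform prior on the finite family $\cF$ of targets that agree on $\mathrm{supp}(\cD)$, which costs only a factor $1/|\cF|$ --- a fixed positive constant, since $|\cF|\le|\cH|\le 2^{|\cX|}<\infty$. The only remaining case is the degenerate $S_0=\emptyset$; there one lets $\cD$ be a point mass at $x'$ with $\eps_0=\tfrac14$ and observes that $\emptyset=\bigcap_{h\in\cH}h\mid_{\cX\setminus\{x'\}}\notin\cH\mid_{\cX\setminus\{x'\}}$ forces every $h\in\cH$ to have a positive point outside $x'$, so the same argument applies verbatim with $\cF=\cH$.
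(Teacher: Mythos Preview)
The paper does not give its own proof of this statement --- it is quoted from \citet{attias2025pac} in the appendix. The closest in-paper argument is the hardness direction of \cref{thm:improvements-minimally-consistent}, which generalizes the cited theorem and uses the same device: a distinguished negative point with improvement set all of $\cX$, a distribution supported on a set where several candidate targets agree, and the observation that any proper consistent output must label positive some point that one of those indistinguishable targets labels negative.

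Your proof is correct and follows exactly this template, with two refinements over the paper's informal ``$h$ could have very well been the target'' step. First, you make the indistinguishability explicit by placing a uniform prior on the finite family $\cF$ of targets consistent on $\mathrm{supp}(\cD)$ and averaging, which yields a fixed confidence parameter $\delta_0=1/|\cF|$; the paper's argument leaves this quantifier exchange implicit. Second, you separately handle the degenerate case $S_0=\emptyset$. One minor wording issue: after the averaging step you write ``there is a fixed $f^*$\ldots Since this holds for every $m$,'' which reads as though the same $f^*$ works for all $m$; what you actually show (and all that is needed to negate PAC-learnability as defined) is that for each $m$ some $f^*_m\in\cF$ works. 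Either note this, or observe that since $\cF$ is finite a single $f^*$ must work for infinitely many $m$.
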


\noindent Note that \cref{thm:hardness-intersection-closed} only applies when there is a point that is classified negative by all concepts in $\cH$ and says nothing about learnability when this condition does not hold. Our main result, \cref{thm:improvements-minimally-consistent}, shows an exact characterization of which concept classes are properly PAC-learnable with improvements for all improvement functions which generalizes both \cref{thm:intersection-closed} and \cref{thm:hardness-intersection-closed}.

\end{document}